\theoremstyle{plain}
\newtheorem{theorem}{Theorem}[section]
\newtheorem{lemma}[theorem]{Lemma}
\newtheorem{corollary}[theorem]{Corollary}
\theoremstyle{definition}
\theoremstyle{remark}
\newcommand{\bW}{{\mathbf W}}
\newcommand{\bY}{{\mathbf Y}}
\newcommand{\bR}{{\mathbf R}}
\newcommand{\bA}{{\mathbf A}}
\newcommand{\bH}{{\mathbf H}}
\newcommand{\bI}{{\mathbf I}}
\newcommand{\bZ}{{\mathbf Z}}
\newcommand{\bC}{{\mathbf C}}
\newcommand{\bh}{{\mathbf h}}
\newcommand{\bzero}{{\mathbf 0}}
\newcommand{\bWT}{{\mathbf W}^{T}}
\newcommand{\bhi}{{\mathbf h}_i}
\newcommand{\by}{{\mathbf y}}
\newcommand{\byb}{{\bar{\mathbf y}}}
\newcommand{\yb}{{\bar{y}}}
\newcommand{\bYb}{{\bar{\mathbf Y}}}
\newcommand{\bx}{{\mathbf x}}
\newcommand{\bb}{{\mathbf b}}
\newcommand{\be}{{\mathbf e}}
\newcommand{\bc}{{\mathbf c}}
\newcommand{\bw}{{\mathbf w}}
\newcommand{\bv}{{\mathbf v}}
\newcommand{\bSigma}{{\mathbf \Sigma}}
\newcommand{\byi}{{\mathbf y}_i}
\newcommand{\bxi}{{\mathbf x}_i}
\newcommand{\lmin}{\lambda_{\text{min}}}
\newcommand{\lH}{\lambda_{\bH}}
\newcommand{\lW}{\lambda_{\bW}}
\newcommand{\Rd}{\mathbb{R}^d}
\title{The Prevalence of Neural Collapse \\in Neural Multivariate  Regression}
\author{%
  George Andriopoulos$^{1}$\thanks{Equal contribution.} \quad Zixuan Dong$^{2,4*}$ \quad Li Guo$^{3*}$ \quad Zifan Zhao$^{3*}$ \quad Keith Ross$^{1*}$\thanks{Corresponding author: \texttt{keithwross@nyu.edu}}\AND
  \text{\normalfont $^1$ New York University Abu Dhabi \quad
  $^2$ SFSC of AI and DL, NYU Shanghai}\\
$^3$ New York University Shanghai \quad
$^4$ New York University\\
  %\texttt{\{ga73,zd662,lg154,zz4330,keithwross\}@nyu.edu} \\
  % examples of more authors
  % \And
  % Coauthor \\
  % Affiliation \\
  % Address \\
  % \texttt{email} \\
  % \AND
  % Coauthor \\
  % Affiliation \\
  % Address \\
  % \texttt{email} \\
  % \And
  % Coauthor \\
  % Affiliation \\
  % Address \\
  % \texttt{email} \\
  % \And
  % Coauthor \\
  % Affiliation \\
  % Address \\
  % \texttt{email} \\
}
\begin{document}

\maketitle

\begin{abstract}
Recently it has been observed that neural networks exhibit Neural Collapse (NC) during the final stage of training for the classification problem. We empirically show that multivariate regression, as employed in imitation learning and other applications, exhibits Neural Regression Collapse (NRC), a new form of neural collapse: (NRC1) The last-layer feature vectors collapse to the subspace spanned by the $n$ principal components of the feature vectors, where $n$ is the dimension of the targets (for univariate regression, $n=1$); (NRC2) The last-layer feature vectors also collapse to the subspace spanned by the last-layer weight vectors; (NRC3) The Gram matrix for the weight vectors converges to a specific functional form that depends on the covariance matrix of the targets. After empirically establishing the prevalence of (NRC1)-(NRC3) for a variety of datasets and network architectures, we provide an explanation of these phenomena by modeling the regression task in the context of the  Unconstrained Feature Model (UFM), in which the last layer feature vectors are treated as free variables when minimizing the loss function. We show that when the regularization parameters in the UFM model are strictly positive, then (NRC1)-(NRC3) also emerge as solutions in the UFM optimization problem. We also show that if the regularization parameters are equal to zero, then there is no collapse. To our knowledge, this is the first empirical and theoretical study of neural collapse in the context of regression. This extension is significant not only because it broadens the applicability of neural collapse to a new category of problems but also because it suggests that the phenomena of neural collapse could be a universal behavior in deep learning.
\end{abstract}

\section{Introduction}
\begin{figure}[htb]
    \centering
    \begin{minipage}{0.45\textwidth}
        \centering
        \includegraphics[width=\linewidth]{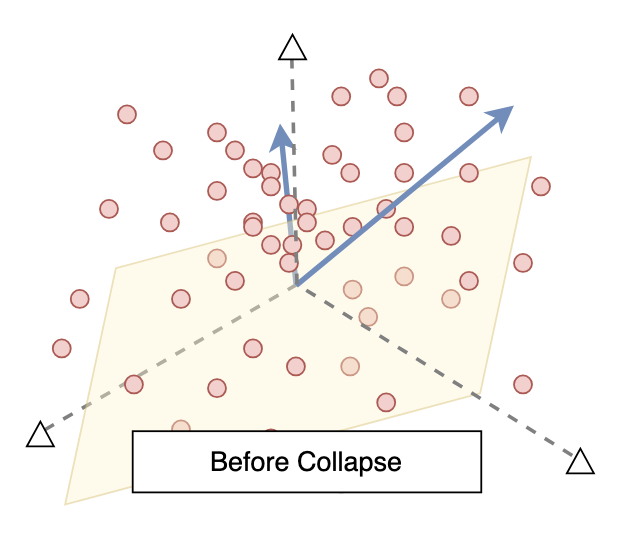}
    \end{minipage}\hfill 
    \begin{minipage}{0.44\textwidth}
        \centering
        \includegraphics[width=0.97\linewidth]{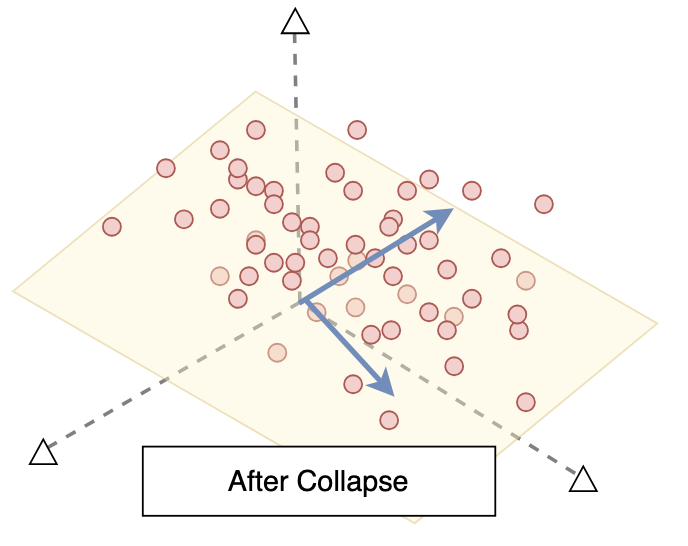}
    \end{minipage}
    \caption[Short Caption]{Visualization of the neural regression collapse. The red dots represent the sample features, the blue arrows represent the row vectors of the last layer weight matrix, and the yellow plane represents the plane spanned by the principal components of the sample features. Here the target dimension is $n=2$. The feature vectors and weight vectors collapse to the same subspace. The angle between the weight vectors takes specific forms governed by the covariance matrix of the targets. }
    \label{fig:full}
\end{figure}
Recently, an insightful phenomenon known as neural collapse (NC) \citep{papyan2020prevalence} has been empirically observed during the terminal phases of training in classification tasks with balanced data. 
NC has three principal components: (NC1) The features of samples within each class converge closely around their class mean. (NC2) The averages of the features within each class converge to form the vertices of a simplex equiangular tight frame. This geometric arrangement implies that class means are equidistant and symmetrically distributed. (NC3) The weight vectors of the classifiers in the final layer align with the class means of their respective features. 
These phenomena not only enhance our understanding of neural network behaviors but also suggest potential simplifications in the architecture and the training of neural networks.

The initial empirical observations of NC have led to the development of theoretical frameworks such as the layered-peeled model \citep{fang2021exploring} and the unconstrained feature model (UFM) \citep{mixon2020neural}. These models help explain why NC occurs in classification tasks theoretically. By allowing the optimization to freely adjust last-layer features along with classifier weights, these models provide important insights into the prevalence of neural collapse, showing that maximal class separability is a natural outcome for a variety of loss functions when the data is balanced \citep{han2021neural, poggio2020explicit, zhou2022optimization, zhou2022all}. 

Regression in deep learning is arguably equally important as classification, as it serves for numerous applications across diverse domains. In imitation learning for autonomous driving, regression is employed to predict continuous control actions (such as speed and steering angles) based on observed human driver behavior. Similarly, regression is used in robotics, where the regression model is trained to imitate expert demonstrations. 
In the financial sector, regression models are extensively used for predictive analytics, such as forecasting stock prices, estimating risk, and predicting market trends. 
Meteorology also heavily relies on regression models to forecast weather conditions. These models take high-dimensional inputs from various sensors and satellites to predict multiple continuous variables such as temperature, humidity, and wind speed. Moreover, many reinforcement learning algorithms include critical regression components, where regression is employed to predict value functions with the targets being Monte Carlo or bootstrapped returns. 

While NC has been extensively studied in classification, to our knowledge, its prevalence and implications in regression remain unexplored.  This paper investigates a new form of neural collapse within the context of neural multivariate regression. Analogous to the classification problem, we introduce Neural Regression Collapse (NRC):  (NRC1) During training, the last-layer feature vectors collapse to the subspace spanned by the $n$ principal components of the feature vectors, where $n$ is the dimension of the targets (for univariate regression, $n=1$); (NRC2) The last-layer feature vectors also collapse to the subspace spanned by the weight vectors; (NRC3) The Gram matrix for the weight vectors converges to a specific functional form that depends on the square-root of the covariance matrix of the targets. A visualization of NRC is shown in Figure \ref{fig:full}. 
%As with classification, (NRC1)-(NRC3) enhance our understanding of neural multivariate regression but also suggests potential simplifications in architectures and the training of neural networks.

Employing six different datasets -- including three robotic locomotion datasets, two versions of an autonomous driving dataset, and an age-prediction dataset -- and Multi-Layer Perceptron (MLP) and ResNet architectures, we establish the prevalence of NRC1-NRC3. 
This discovery suggests a universal geometric behavior extending beyond classification into regression models, simplifying our understanding of deep learning more generally.

To help explain these phenomena, we then apply the UFM model to neural multivariate regression with an L2 loss function. In this regression version of the problem, the optimization problem aims to minimize the regularized mean squared error over continuous-valued targets. 
We show that when the regularization parameters in the UFM model are strictly positive, then (NRC1)-(NRC3) also emerge as solutions in the UFM optimization problem,
thereby providing a mathematical explanation of our empirical observations. Among many observations, we discover empirically and theoretically that when the regression parameters are zero or very small, there is no collapse; and if we increase the parameters a small amount above zero, the (NRC1)-(NRC3) geometric structure emerges. 

To the best of our knowledge, this is the first empirical and theoretical study of neural collapse in the context of regression. 
%This extension is significant not only because it broadens the applicability of neural collapse to a new category of problems but also because it suggests that the phenomena of neural collapse could be a universal behavior in deep learning. 
By demonstrating the prevalence of neural collapse in regression tasks, we reveal that deep learning systems might inherently simplify their internal representations, irrespective of the specific nature of the task, whether it be classification or regression. 
%This simplification can lead to more explainable models and potentially more efficient training processes. 

%The broader application of neural collapse in multivariate regression not only enriches our theoretical understanding of neural dynamics but also opens new pathways for practical implementations in fields as varied as robotics, finance, and environmental science.

\section{Related work}

Neural collapse (NC) was first identified by \cite{papyan2020prevalence} as a symmetric geometric structure observed in both the last layer features and classification vectors during the terminal phase of training of deep neural networks for classification tasks, particularly evident in balanced datasets. Since then, there has been a surge of research into both theoretical and empirical aspects of NC.

Several studies have investigated NC under different loss functions. For instance, \citep{han2021neural, poggio2020explicit, zhou2022optimization}  have observed and studied neural collapse under the Mean Squared Error (MSE) loss, while papers such as \citep{zhou2022all, guo2024cross} have demonstrated that label smoothing loss and focal loss also lead to neural collapse. In addition to the last layer, some papers \citep{he2023law, rangamani2023feature} have also examined the occurrence of the NC properties within intermediate layers.
Furthermore, beyond the balanced case, researchers have investigated the neural collapse phenomena in imbalanced scenarios. \citep{fang2021exploring} identified a phenomenon called minority collapse for training on imbalanced data, while \citep{hong2023neural,thrampoulidis2022imbalance, dang2023neural} offer more precise characterizations of the geometric structure under imbalanced conditions. 

To facilitate the theoretical exploration of the neural collapse phenomena, \citep{fang2021exploring, mixon2020neural} considered the unconstrained feature model (UFM). The UFM simplifies a deep neural network into an optimization problem by treating the last layer features as free variables to optimize over. 
This simplification is motivated by the rationale of the universal approximation theorem \citep{hornik1989multilayer}, asserting that sufficiently over-parameterized neural networks can be highly expressive and can accurately approximate arbitrary smooth functions. 
Leveraging the UFM, studies such as 
\citep{zhu2021geometric,zhou2022optimization, thrampoulidis2022imbalance, tirer2022extended, tirer2023perturbation, ergen2021revealing, wojtowytsch2020emergence}  have investigated models with different loss functions and regularization techniques. These studies have revealed that the global minima of the empirical risk function under UFMs align with the characterization of neural collapse observed by \citep{papyan2020prevalence}.
Beyond the UFM, some work \citep{tirer2022extended, sukenik2024deep} has extended the model to explore deep constrained feature models with multiple layers, aiming to investigate neural collapse properties beyond the last layer.

In addition to its theoretical implications, NC serves as a valuable tool for gaining deeper insights into DNN models and various regularization techniques \citep{guo2024cross, fisher2024pushing}. It provides crucial insights into the generalization and transfer learning capabilities of neural networks \citep{hui2022limitations, kothapalli2022neural, galanti2021role}, inspiring the design of enhanced model architectures for diverse applications. These include scenarios with imbalanced data \citep{yang2022inducing, kimfixed} and contexts involving online continuous learning \citep{seo2024learning}.

Despite extensive research on the neural collapse phenomena and its implications in classification, to the best of our knowledge,  there has been no investigation into similar issues regarding neural regression models.  Perhaps the paper closest to the current work is \citep{zhou2022optimization}, which applies the UFM model to the balanced classification problem with MSE loss. 
Although focused on classification, \citep{zhou2022optimization} 
derive some important results which apply to regression as well as to classification. 
Our UFM analysis leverages this related paper, particularly their Lemma B.1.

%Drawing insights from classification tasks, we observe that the geometric structure of the last layer feature and classifier solely depends on the target variable, i.e., the number of classes and whether the distribution is balanced. Are similar observations applicable to regression tasks? What is the geometric structure of the last layer feature and the norm of the last linear layer under regression? This paper aims to address these questions.The UFM and related models help explain the prevalence of neural collapse in the neural classification problem. Motivated by the high expressiveness of over-parameterized neural networks, the UFM model treats the feature vector at the output of the penultimate layer as a free variable in the optimization problem. The resulting optimization problem remains non-convex but is nevertheless more mathematically tractable than when using the original neural network. {\bf Add references and breif discussion here, more or less in chronological order, perhaps beginning with the layer-peeled paper. Every NC paper that has been published in a major venue should be cited. Do not use mathematical notation, and keep summaries of papers very brief. If possible group related papers together and use one sentence to summarize them. }

\section{Prevalence of neural regression collapse}

We consider the multivariate regression problem with $M$ training examples $\{(\bxi,\byi), i=1,\ldots,M \}$, where each input $\bxi$ belongs to $\mathbb{R}^D$ and each 
target vector $\byi$ belongs to $\mathbb{R}^n$. 
For the regression task, the deep neural network (DNN) takes as input an example $\bx \in \mathbb{R}^D$ and produces an output $\by = f(\bx) \in \mathbb{R}^n$. For most DNNs, including those used in this paper, this mapping takes the form
$f_{\theta,\bW,\bb}(\bx)=\bW \bh_{\mathbf{\theta}}(\bx) + \bb$,
where $\bh_{\mathbf{\theta}}(\cdot): \mathbb{R}^D\to \Rd$ is the non-linear feature extractor consisting of several nonlinear layers, $\bW$ is a $n \times d$ matrix representing the final linear layer in the model, and $\bb \in \mathbb{R}^n$ is the bias vector. 
For most neural regression tasks, $n << d$, that is the dimension of the target space is much smaller than the dimension of the feature space. For univariate regression, $n=1$. 
The parameters ${\mathbf{\theta}}$, $\bW$, and $\bb$ are all trainable.

We train the DNN using gradient descent to minimize the regularized L2 loss: 
\[
\min_{\mathbf{\theta}, \bW, \bb} \frac{1}{2M} \sum_{i=1}^{M} ||f_{\theta,\bW,\bb}(\bxi) - \byi||_2^2 + \frac{\lambda_{\theta}}{2} ||\mathbf{\theta}||_2^2 +  \frac{\lambda_{\bW}}{2}||\bW||_F^2,
\]
where $||\cdot||_2$ and $||\cdot||_F$ denote the $L_2$-norm and the Frobenius norm, respectively.
%where $||\bv||_2^2 = \sum_i v_i^2$ for any vector $\bv$, and $||\cdot||_F^2$ denotes the Frobenius norm.
As commonly done in practice,  in our experiments we set all the regularization parameters to the same value, which we refer to as the weight-decay parameter $\lambda_{WD}$, that is, we set $\lambda_{\theta} = \lambda_{\bW} =\lambda_{WD}$. 

\subsection{Definition of neural regression collapse}

In order to define Neural Regression Collapse (NRC), let  $\bSigma$ denote the $n \times n$ covariance matrix corresponding to the targets $\{\byi, i=1,\ldots,M \}$: $\bSigma = M^{-1}(\bY - \bYb) (\bY - \bYb)^{T}$, where $\bY = [\by_1 \cdots \by_M]$, $\bYb = [\byb \cdots \byb]$, and $\byb = M^{-1} \sum_{i=1}^M \byi$.
Throughout this paper, we make the natural assumption that $\bY$ and $\bSigma$ have full rank. Thus $\bSigma$ is positive definite. 
Let $\lmin >0$ denote the minimum eigenvalue of $\bSigma$. 
%For example, when the dimension of target space is $n=2$, we have 
%\[
%\bSigma =
%\begin{bmatrix}
%    M^{-1} \sum_{i=1}^M ( \by_i^{(1)} - \byb^{(1)})^2 & 
%    M^{-1} \sum_{i=1}^M ( \by_i^{(1)} - \byb^{(1)}) (\by_i^{(2)}- \byb^{(2)})
%    \\
%     M^{-1} \sum_{i=1}^M ( \by_i^{(1)} - \byb^{(1)}) (\by_i^{(2)}- \byb^{(2)}) & 
%   M^{-1} \sum_{i=1}^M (\by_i^{(2)}- \byb^{(2)})^2
%\end{bmatrix}
%\]

Denote $\bH := [\bh_1 \cdots \bh_M$], where $\bhi$ is the feature vector associated with input $\bxi$, that is,  $\bhi := \bh_{\mathbf{\theta}}(\bx_i)$. Further denote the normalized feature vector $\widetilde{\bh}_i := \bhi\cdot||\bhi||^{-1}$. Of course, $\bW$, $\bH$, and $\bb$ are changing throughout the course of training. 
For any $p \times q$ matrix $\bC$ and any $p$-dimensional vector $\bv$, let $proj(\bv|\bC)$ denote the projection of $\bv$ onto the subspace spanned by the columns of $\bC$. 
Let $\bH_{PCA_n}$ be the $d \times n$ matrix with the columns consisting of the $n$
principal components of $\bH$.  

%{\bf (NC1) Feature vector collapse} measures the average projection error when the features are projected to the subspace spanned by the $n$ principle components of $\bH$: 
%\[
%{NRC}_1 := \frac{1}{M} \sum_{i=1}^M ||\bhi- proj(\bhi|\bH_{PCA_n})||_2^2
%\]
%{\bf (NC2) Self Duality} measures the average projection error when the features are projected to the subspace spanned by the $n$ rows of $\bW$ in the last linear layer: 
%\[
%{NRC}_2 := \frac{1}{M} \sum_{i=1}^M ||\bhi- proj(\bhi|\bW^T)||_2^2
%\]
%{\bf (NC3) Structured last-layer weights } measures to what extent the Gram matrix for $\bW$ takes a specific form: 

We say that {\em Neural Regression Collapse (NRC)} emerges during training if the following three phenomena occur: 
\begin{itemize}
    \item NRC1 = $\displaystyle \frac{1}{M} \sum_{i=1}^M \left|\left|\widetilde{\bh}_i - proj(\widetilde{\bh}_i |\bH_{PCA_n}) \right|\right|_2^2\to 0$.
    \item NRC2 = $\displaystyle \frac{1}{M}\sum_{i=1}^M \left|\left|\widetilde{\bh}_i - proj(\widetilde{\bh}_i |\bW^T) \right|\right|_2^2 \to 0$.
    \item There exists a constant $ \gamma \in (0, \lambda_{\min})$ such that:
\[
\text{NRC3} = \left|\left|\frac{\bW\bWT}{||\bW\bWT||_F} - \frac{ \bSigma^{1/2} - \gamma^{1/2} \bI_n }{||\bSigma^{1/2} - \gamma^{1/2} \bI_n||_F }\right|\right|_F^2 \to 0.
\]
\end{itemize}
NRC1 $\to 0$ indicates that there is {\em feature-vector collapse}, that is, the $d$-dimensional feature vectors $\bhi$, $i=1,\ldots,M$, collapse to a much  lower $n$-dimensional subspace spanned by their $n$ principal components. In many applications, $n=1$, in which case the feature vectors are collapsing to a line in the original $d$-dimensional space.
NRC2 $\to 0$ indicates that there is a form of {\em self duality}, that is, the feature vectors also collapse to the $n$-dimensional space spanned by the rows of $\bW$.  
%NRC1 and NRC2 when taken together imply that the feature vectors collapse to the space spanned by the rows of $\bW$. 
NRC3 $\to 0$ indicates that the last-layer weights have a {\em specific structure} within the collapsed subspace. In particular, it gives detailed information about the norms of the row vectors in $\bW$ and the angles between those row vectors. NRC3 $\to 0$ indicates that angles between the rows in $\bW$ are influenced by $\bSigma^{1/2}$. If the targets are uncorrelated so that $\bSigma$ and $\bSigma^{1/2}$ are diagonal, then NRC3 $\to 0$ implies that the rows in $\bW$ will be orthogonal.  
NRC3 $\to 0$ also implies a specific structure for the feature vectors, as discussed in Section 4. 

\subsection{Experimental validation of neural regression collapse}
\label{sec:exp_case1}

In this section, we validate the emergence of NRC1-NRC3 during training across various datasets and deep neural network (DNN) architectures.

\textbf{Datasets}. The empirical experiments in this section are based on the following datasets:
\begin{itemize}
\item The {\bf Swimmer}, {\bf Reacher}, and {\bf Hopper datasets} are based on MoJoCo \citep{mujoco, brockman2016openai,towers_gymnasium_2023}, a physics engine that simulates diverse continuous multi-joint robot controls and has been a canonical benchmark for deep reinforcement learning research. In our experiments, we use publicly available expert datasets (see appendix \ref{appendix:mujoco}).
Each dataset comprises raw robotic states as inputs ($\bxi$'s) and robotic actions as targets ($\byi$'s). In order to put these expert datasets in an imitation learning context, we reduced the size of the dataset by keeping only a small portion of the episodes. 
\item The {\bf CARLA dataset} originates from the CARLA Simulator, an open-source project designed to support the development of autonomous driving systems. We utilize a dataset \cite{Codevilla2018} sourced from expert-driven offline simulations. During these simulations, images ($\bxi$'s) from cameras mounted on the virtual vehicle and corresponding expert driver actions as targets ($\byi$'s) are recorded as human drives in the simulated environment. We consider two dataset versions: a 2D version with speed and steering angle, and a 1D version with only the speed.
\item The {\bf UTKFace dataset} \citep{zhifei2017cvpr} is widely used in computer vision to study age estimation from facial images of humans. This dataset consists of about 25,000 facial images spanning a wide target range of ages, races, and genders.  
\end{itemize}

Table \ref{tab:data} summarizes the six datasets, with the dimensions of the target vectors $\by$ ranging from one to three. The table also includes the minimum eigenvalue of the associated covariance matrix $\bSigma$ and the Pearson correlation values between the $i$-th and $j$-th target components for $i \neq j$. When $n=1$, there is no correlation value; when $n=2$, there is one correlation value between the two target components; and when $n=3$, there are three correlation values among the three target components. From the table, we observe that the target components in CARLA 2D and Reacher are nearly uncorrelated, whereas those in Hopper and Swimmer exhibit stronger correlations.

\begin{table}[h!]

\label{table:datasets}
% \vskip 0.15in
\caption{Overview of datasets employed in our neural regression collapse analysis.}
\begin{center}
\begin{small}
\begin{tabular}{ccccccc}
\toprule
\textbf{Dataset} & \textbf{Data Size} & \textbf{Input Type} & \textbf{Target Dimension} $n$ & \textbf{Target Correlation} & $\mathbf{\lambda_{min}}$\\
\midrule
Swimmer & 1,000 & raw state & 2 & -0.244 & 0.276\\

Reacher & 1,000 & raw state & 2 & -0.00933 & 0.0097\\

Hopper & 10,000 & raw state & 3  & [-0.215, -0.090, 0.059]  & 0.215\\
\midrule
Carla 1D & 600,000 & RGB image & 1 & NA & 208.63\\
\midrule
Carla 2D & 600,000 & RGB image & 2 & -0.0055 & 0.156\\
\midrule
UTKface & 25,000 & RGB image & 1 & NA & 1428\\

\bottomrule
\end{tabular}
\end{small}
\end{center}
\label{tab:data}
\end{table}

\textbf{Experiment Settings}. For the Swimmer, Reacher, and Hopper datasets, we employed a four-layer MLP (with the last layer being the linear layer) as the policy network for the prediction task. Each layer consisted of 256 nodes, aligning with the conventional model architecture in most reinforcement learning research \citep{tarasov2022corl}. 
%The model was trained under MSE loss over 1000 epochs without any data augmentation.
For the CARLA and UTKFace datasets, we employed ResNet18 and ResNet34 \cite{he2016deep}, respectively. To focus on behaviors associated with neural collapse and minimize the influence of other factors, we applied standard preprocessing without data augmentation. \looseness=-1

All experimental results are averaged over at least 2 random seeds and variance is displayed by a shaded area. The choices of weight decay employed during training varied depending on the dataset. Also, the number of epochs required for training depends on both the dataset and the degree of weight decay. In particular, we used a large number of epochs when using very small weight decay values. appendix \ref{sec:a_exp} provides the full experimental setup.

\textbf{Empirical Results}. Figure \ref{NRC_Case1} presents the experimental results for the six datasets mentioned above. The results show that the training and testing errors decrease as training progresses, as expected. The converging coefficient of determination ($R^2$) also indicates that model performance becomes stable. Most importantly, the figure confirms the presence of NRC1-NRC3 across all six datasets. This indicates that neural collapse is not only prevalent in classification but also often occurs in multivariate regression. 
\begin{figure}[h]
    \centering
    \includegraphics[width=1\linewidth]{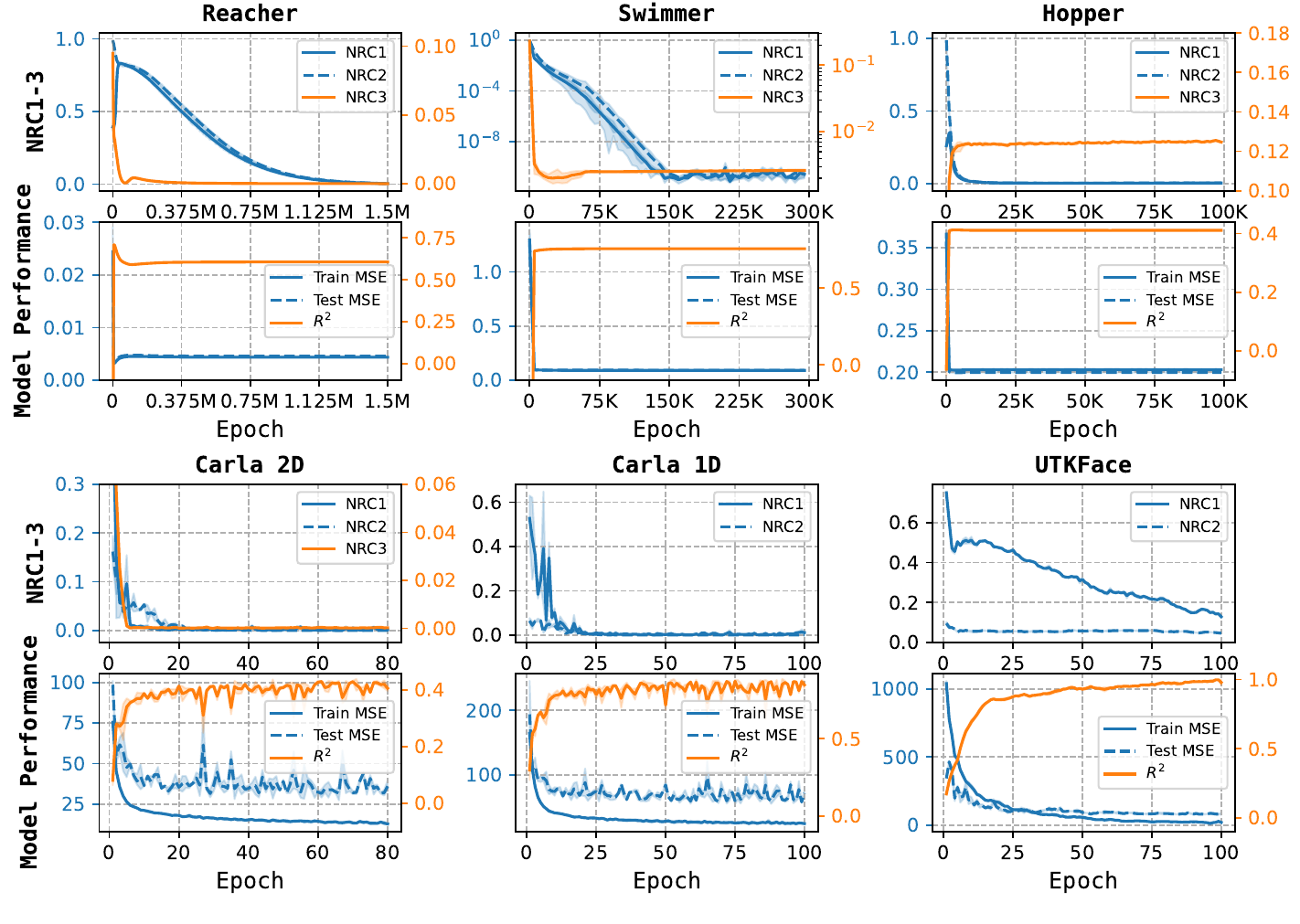}
    \caption{Prevalence of NRC1-NRC3 in the six datasets. Train/Test MSE and the coefficient of determination ($R^2$) are also shown.}
    \label{NRC_Case1}
\end{figure}
\begin{figure}[h]
    \centering
    \includegraphics[width=1\linewidth]{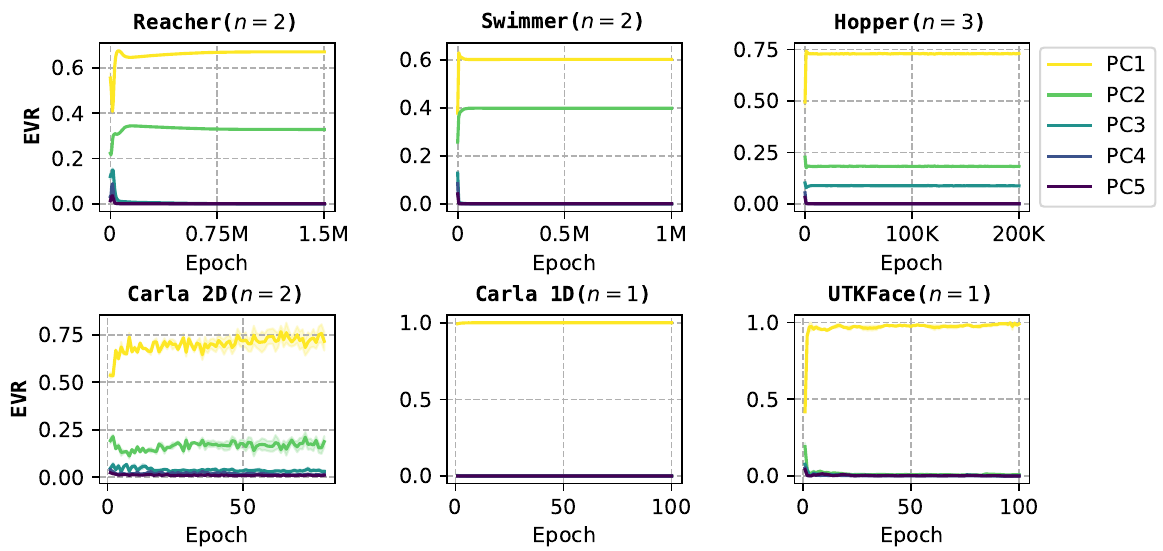}
    \caption{Explained Variance Ratio (EVR) for the first 5 principal components (PC).}
    \label{fig:evr}
\end{figure}

We also experimentally analyze the explained variance ratio (EVR) of principal components to further verify the collapse to the subspace spanned by the first $n$ components. In Figure \ref{fig:evr}, we investigate the EVR of the first 5 principal components of $\bH$ during the training process. For all datasets, there is significant variance for all of the first $n$ components after a short period of training; for other components, there is very low or even no variance. This also supports that a perfect collapse occurs in the subspace spanned by the first $n$ principal components.

Our definition of NRC3 involves finding a scaling factor $\gamma$ for which the property holds. Figure \ref{optimal_c} illustrates the values of NRC3 as a function of $\gamma$ for $\bW$ obtained after training. We observe that each dataset exhibits a unique minimum value of $\gamma$. More details about computing NRC3 can be found in appendix \ref{appendix: nrc3}.
\begin{figure}[htb]
    \centering
    \includegraphics[width=\linewidth]{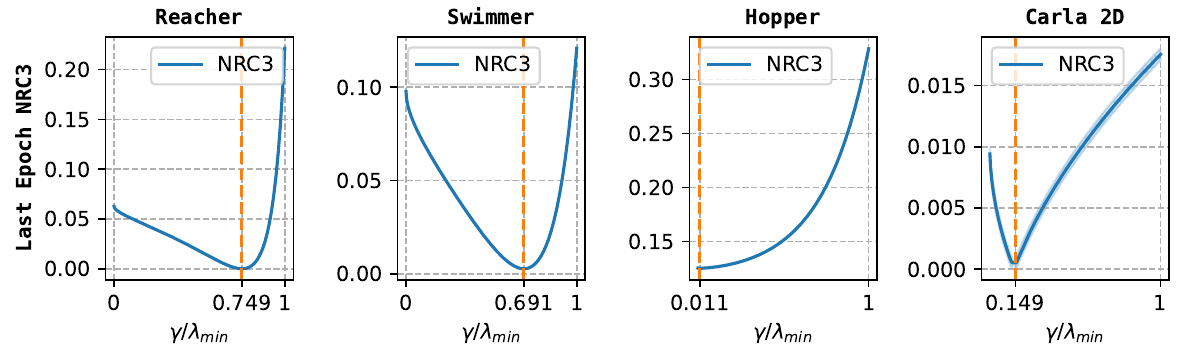}
    \caption{The optimal value of $\gamma$ for NRC3.}
    \label{optimal_c}
\end{figure}

Figure \ref{small_reg} investigates neural regression collapse for small values of the weight-decay parameter $\lambda_{WD}$. (appendix \ref{appendix: full_fig4} contains results on all 6 datasets.) We see that when $\lambda_{WD} = 0$, there is no neural regression collapse. However, if we increase $\lambda_{WD}$ by a small amount, collapse emerges for all three metrics. Thus we can conclude that the geometric structure NRC1-3 that emerges during training is due to regularization, albeit the regularization can be very small. In the next section, we will introduce a mathematical model that helps explain why there is no collapse when  $\lambda_{WD} = 0$ and why it quickly emerges as $\lambda_{WD}$ is increased above zero.

\begin{figure}[hb]
    \centering
    \includegraphics[width=\linewidth]{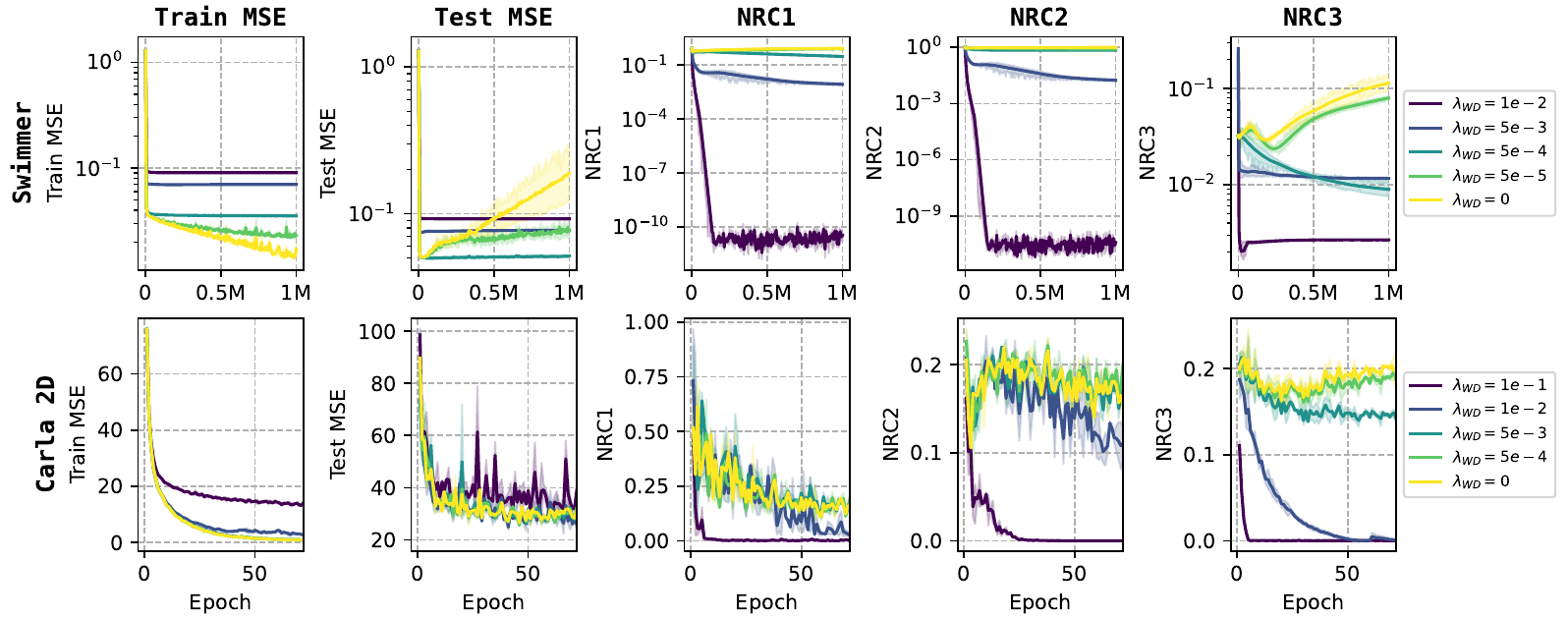}
    \caption{Phase change in neural collapse for small weight-decay values}
    \label{small_reg}
\end{figure}

\section{Unconstrained feature model} \label{sec: theoretical}
As discussed in the related work section, the UFM model has been extensively used to help explain the prevalence of neural collapse in the classification problem. In this section, we explore whether the UFM model can also help explain neural collapse in neural multivariate regression. 

Specifically, we consider minimizing $\mathcal{L}(\bH, \bW,\bb)$, where
\begin{align} \label{formofloss}
   \mathcal{L}(\bH, \bW,\bb) 
   %& := \frac{1}{2M} \sum_{j=1}^n \sum_{i=1}^M (\bw^{(j)} \bh_i +\bb^{(j)} - \by_i^{(j)})^2
    %+  \frac{\lH}{2 M} \sum_{i=1}^M ||\bh_i||_2^2 
    %+ \frac{\lW}{2} \sum_{j=1}^n ||\bw_j||_2^2 
   % \nonumber \\
    & = \frac{1}{2M} ||\bW \bH + \bb \mathbf{1}_M^{T} - \bY||_F^2 + \frac{\lH}{2M} ||\bH||_{F}^2 + \frac{\lW}{2} ||\bW||_F^2,
\end{align}
where $\mathbf{1}_M^T:=[1\cdots 1]$ and $\lH$, $\lW$ are non-negative regularization constants. 

The optimization problem studied here bears some resemblance to the standard linear multivariate regression problem. If we view the features $\bh_i$, $i=1,\ldots,M$, as the inputs to linear regression, then $\hat{\by}_i := \bW \bh_i + \bb$ is the predicted output, and $||\byi -  \hat{\by}_i||_2^2$ is the squared error. In standard linear regression, the $\bhi$'s are fixed inputs. In the UFM model, however, not only are we optimizing over the weights $\bW$ and biases $\bb$ but also  over all the ``inputs'' $\bH$. 

For the case of classification, regularization is needed in the UFM model to prevent the norms of $\bH$ and/or $\bW$ from going to infinity in the optimal solutions. In contrast, in the UFM regression model, the norms in the optimal solutions will be finite even without regularization. However, as regularization is typically used in neural regression problems to prevent overfitting, it is useful to include regularization in the UFM regression model as well. 

\subsection{Regularized loss function}

Throughout this subsection, we assume that both $\lW$ and $\lH$ are strictly positive. We shall consider the $\lW = \lH = 0$ case subsequently. 
We also make a number of assumptions in order to not get distracted by less important sub-cases. 
Throughout we assume $n \leq d$, that is, the dimension of the targets is not greater than the dimension of the feature space. As stated in a previous subsection, for problems of practical interest, we have $n << d$.
Recall that $\bSigma$ is the covariance matrix of the target data. 
Since $\bSigma$ is a covariance matrix and is assumed to have full rank, it is also positive definite. It therefore has a positive definite square root, which we denote by $\bSigma^{1/2}$.  
Let $\lambda_{\max} := \lambda_1 \geq \lambda_2 \geq \cdots \geq \lambda_n := \lambda_{\min}> 0$ denote the $n$ eigenvalues of $\bSigma$. 
We further define the $n \times n$ matrix 
\begin{equation} \label{defofa}
{\bf A} := \bSigma^{1/2} -\sqrt{c}\bI_n,
%{\bf A} := \sqrt{\frac{\lH}{\lW}} \left(\bSigma^{1/2} -\sqrt{c}\bI_n\right).
\end{equation}
where $c := \lW \lH$. Also for any $p \times q$ matrix $\bC$ with columns $\bc_1, \bc_2,\ldots,\bc_q$, we denote $[\bC]_j$ to be 
the $p \times q$ matrix whose first $j$ columns are identical to those in $\bC$ and whose last $q-j$ columns are all zero vectors, i.e., $[\bC]_j = [\bc_1 \; \bc_2 \cdots\bc_j \; \bzero \cdots \bzero]$. All proofs are provided in the appendix. 
\begin{theorem} \label{gendim}  
Any global minimum $(\bW,\bH,\bb)$ for (\ref{formofloss}) takes the following form:
If $0 < c < \lambda_{\max}$, then for any semi-orthogonal matrix $\bR$,
\begin{equation} \label{optima1}
%\bW  =  \left( {\frac{\lH}{\lW}} \right)^{1/4} [\bA^{1/2}]_{j*}\bR, \hspace{.2in}
%\bH  =   \left( {\frac{\lW}{\lH}} \right)^{1/4} \bR^{T}  [\bA^{1/2}]_{j*} [\bSigma^{1/2}]^{-1} (\bY - \bYb),
%\hspace{.2in} \bb =  \byb
\bW  =  \left( {\frac{\lH}{\lW}} \right)^{1/4} [\bA^{1/2}]_{j*}\bR, \hspace{.2in}
\bH  =   \sqrt{\frac{\lW}{\lH}} \bW^{T}  [\bSigma^{1/2}]^{-1} (\bY - \bYb),
\hspace{.2in} \bb =  \byb,
\end{equation}
where $j* := \max \{ j : \lambda_j \geq c \}$.
If $c > \lambda_{\max}$, then $(\bW,\bH,\bb) = (\bzero,\bzero,\byb)$.
Furthermore, if $(\bW,\bH,\bb)$ is a critical point but not a global minimum, then it is a strict saddle point. 
\end{theorem}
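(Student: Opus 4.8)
The plan is to minimize $\mathcal{L}$ by eliminating $\bb$ and $\bH$ in closed form, reducing to an objective in the $n\times n$ Gram matrix $\mathbf{G} := \bW\bWT$, solving that scalar-decoupled problem to get (\ref{optima1}), and finally lifting the landscape description back to the full variables for the saddle statement.

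\textbf{Eliminating $\bb$ and $\bH$.} For fixed $(\bW,\bH)$ the map $\bb\mapsto\mathcal{L}$ is a strictly convex quadratic minimized at $\bb=\byb-\bW\bar{\bh}$, where $\bar{\bh}=M^{-1}\bH\mathbf{1}_M$. Substituting shows the data term depends on $\bH$ only through the centered matrix $\bH-\bar{\bh}\mathbf{1}_M^{T}$, while $\|\bH\|_F^2=\|\bH-\bar{\bh}\mathbf{1}_M^{T}\|_F^2+M\|\bar{\bh}\|_2^2$, so the feature penalty forces $\bar{\bh}=\bzero$ and hence $\bb=\byb$ at every critical point. For fixed $\bW$ the $\bH$-subproblem is strictly convex (as $\lH>0$), and the normal equations together with the push-through identity $(\bWT\bW+\lH\bI_d)^{-1}\bWT=\bWT(\bW\bWT+\lH\bI_n)^{-1}$ give the unique minimizer $\bH=\bWT(\bW\bWT+\lH\bI_n)^{-1}(\bY-\bYb)$, matching (\ref{optima1}) once $\bW$ is determined. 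Because both inner subproblems are strictly convex, any critical point of $\mathcal{L}$ has $(\bH,\bb)$ at these conditional optima.

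\textbf{The reduced objective.} Writing $\mathbf{P}:=(\bW\bWT+\lH\bI_n)^{-1}$ and using $\bW\bH-(\bY-\bYb)=-\lH\mathbf{P}(\bY-\bYb)$ and $(\bY-\bYb)(\bY-\bYb)^{T}=M\bSigma$, a short trace computation (cyclicity plus $\mathbf{P}(\bW\bWT+\lH\bI_n)=\bI_n$) collapses the data term and the feature penalty into a single term, leaving
\[
\phi(\bW)=\tfrac{\lH}{2}\operatorname{tr}\!\big[(\bW\bWT+\lH\bI_n)^{-1}\bSigma\big]+\tfrac{\lW}{2}\operatorname{tr}(\bW\bWT).
\]
Since $n\le d$, the Gram matrix $\mathbf{G}=\bW\bWT$ ranges over all positive-semidefinite $n\times n$ matrices and $\phi$ depends on $\bW$ only through $\mathbf{G}$. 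For fixed eigenvalues of $\mathbf{G}$ the penalty $\operatorname{tr}(\mathbf{G})$ is fixed, and by Schur--Horn together with the rearrangement inequality the term $\operatorname{tr}[(\mathbf{G}+\lH\bI_n)^{-1}\bSigma]$ is minimized exactly when $\mathbf{G}$ and $\bSigma$ are simultaneously diagonalizable with the eigenvalues matched in the same order. Writing $\bSigma=\mathbf{U}\operatorname{diag}(\lambda_i)\mathbf{U}^{T}$ and $\mathbf{G}=\mathbf{U}\operatorname{diag}(\mu_i)\mathbf{U}^{T}$, the objective decouples into scalar problems $\min_{\mu_i\ge 0}\tfrac{\lH}{2}\tfrac{\lambda_i}{\mu_i+\lH}+\tfrac{\lW}{2}\mu_i$, whose solution is $\mu_i=\sqrt{\lH/\lW}\,(\sqrt{\lambda_i}-\sqrt{c})_+$, which is positive exactly when $\lambda_i>c$, i.e. for $i\le j*$. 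This reproduces the eigenstructure of $[\bA^{1/2}]_{j*}$, and unwinding $\mathbf{G}$ back to $\bW$ (unique up to a right semi-orthogonal factor $\bR$) and then to $\bH$ recovers (\ref{optima1}); the regime $c>\lambda_{\max}$ gives $\mathbf{G}=\bzero$ and the trivial solution $(\bzero,\bzero,\byb)$.

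\textbf{Strict saddles.} For the saddle claim I would argue on $\phi$ and lift. Because the eliminated $(\bH,\bb)$ block is strictly convex with a unique smooth minimizer, at any critical point of $\mathcal{L}$ the $(\bH,\bb)$-block of the Hessian is positive definite and its Schur complement equals $\nabla^2\phi(\bW)$; by Haynsworth inertia additivity the full Hessian has a negative eigenvalue iff $\nabla^2\phi(\bW)$ does, so $(\bW,\bH,\bb)$ is a local minimum (resp.\ strict saddle) of $\mathcal{L}$ iff $\bW$ is one of $\phi$. It then suffices to classify critical points of $\phi$. Setting $\nabla_\bW\phi=\lW\bW-\lH\mathbf{P}\bSigma\mathbf{P}\bW=0$, one shows (as in Lemma~B.1 of \citep{zhou2022optimization}) that any critical $\mathbf{G}=\bW\bWT$ commutes with $\bSigma$ and has eigenvalues lying in $\{0\}\cup\{\sqrt{\lH/\lW}(\sqrt{\lambda_i}-\sqrt{c})\}$; a non-global critical point must therefore either activate an eigendirection $i$ of $\bSigma$ while leaving a strictly larger one $j$ inactive, or misalign an eigenvector, and in each case I would construct an explicit rank-one perturbation of $\bW$ that transfers weight from an inferior to a superior eigendirection and verify that the second-order term of $\phi$ along it is strictly negative.

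The two genuinely nontrivial steps are (i) the majorization/rearrangement argument that pins the optimizer to commute with $\bSigma$ and select its top $j*$ eigendirections, where the nonnegativity constraint $\mu_i\ge 0$ interacts with the ordering; and (ii) exhibiting the negative-curvature perturbation at every non-global critical point while confirming, via the Schur-complement argument, that the strict-saddle property survives the elimination of $(\bH,\bb)$. Once the eigenvalue selection is established, matching the normalization constants $(\lH/\lW)^{1/4}$ and the identity $\mathbf{P}^{-1}=\sqrt{\lH/\lW}\,\bSigma^{1/2}$ at the optimum (which yields $\bH=\sqrt{\lW/\lH}\,\bWT[\bSigma^{1/2}]^{-1}(\bY-\bYb)$) is routine algebra.
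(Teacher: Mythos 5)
Your proposal is correct in its main thrust but takes a genuinely different route from the paper. The paper does not re-derive the optimization landscape: it imports Lemma B.1 of \citet{zhou2022optimization} wholesale, which already supplies both the form of the product $\bW\bH$ at any local minimum (the SVD of $\bY-\bYb$ with singular values soft-thresholded at $\sqrt{Mc}$) and the strict-saddle property; the remaining work in the paper is to split that product into its factors using the first-order balance identity $\frac{\lH}{M}\bH\bHT=\lW\bWT\bW$ and to translate the SVD of the centered targets into the eigendecomposition of $\bSigma$ (which is where $\sqrt{\lambda_i}=\sigma_i/\sqrt{M}$ and the matrix $\bA$ enter). You instead eliminate $(\bb,\bH)$ by strict convexity and reduce to the Gram-matrix objective
\begin{equation*}
\phi(\bW)=\tfrac{\lH}{2}\operatorname{tr}\bigl[(\bW\bWT+\lH\bI_n)^{-1}\bSigma\bigr]+\tfrac{\lW}{2}\operatorname{tr}(\bW\bWT),
\end{equation*}
which I have checked is the correct reduction (the data term and the feature penalty do collapse into the single trace term via $\bW\bH-(\bY-\bYb)=-\lH(\bW\bWT+\lH\bI_n)^{-1}(\bY-\bYb)$), and then solve it by a von Neumann/Ruhe trace inequality plus scalar decoupling, recovering $\mu_i=\sqrt{\lH/\lW}\,(\sqrt{\lambda_i}-\sqrt{c})_+$ and hence \eqref{optima1}, including the correct formula for $\bH$ once one notes $\mu_i+\lH=\sqrt{\lH/\lW}\,\sqrt{\lambda_i}$ on the active directions. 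What your route buys is a self-contained derivation of the global minima that makes the soft-thresholding at $\sqrt{c}$ and the cases $c\lessgtr\lambda_j$ transparent, where the paper's route outsources exactly this to the cited lemma.

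The one place your plan is materially thinner than a proof is the strict-saddle claim. The Haynsworth/Schur-complement reduction of the Hessian to $\nabla^2\phi$ is sound (the $(\bH,\bb)$ block is positive definite since $\lH>0$), but the classification of non-global critical points of $\phi$ and the explicit negative-curvature perturbations are precisely the technical content of Zhou et al.'s Lemma B.1, to which you partly defer. Either you invoke that lemma, in which case you are back on the paper's path for this part, or the perturbation construction (transferring weight from an activated smaller eigendirection to an inactive larger one, and handling misaligned eigenvectors) must be carried out in full; it is the only step of your outline that is not routine to complete.
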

Theorem \ref{gendim} has numerous implications, which we elaborate on below.

\subsection{One-dimensional univariate case}
\label{sec:1d_theory}
In this subsection, we highlight the important special case $n=1$, which often arises in practice (such as with Carla 1D and the UTKface datasets). When $n=1$, $\bSigma$ is simply the scalar $\sigma^2$, which is the variance of the one-dimensional targets over the $M$ samples. 
Also, $\bW$ is a row vector, which we denote by $\bw$. 
Theorem \ref{gendim}, for $n=1$ provides the following insights:
\begin{enumerate}
    \item Depending on whether $0 < c < \sigma^2$ or not, the global minimum takes on strikingly different forms. In the case, $c > \sigma^2$, corresponding to very large regularization parameters, the optimization problem ignores the MSE and entirely focuses on minimizing the norms $||\bH||_{F}^2$ and $||\bw||_2^2$, giving $||\bH||_{F}^2 =0 $, $||\bw||_2^2 =0$.
    \item When $0<c < \sigma^2$, the optimal solution takes a more natural and interesting form:
    For any unit vector $\be \in \Rd$, the solution $(\bH, \bw, b)$ given by 
\begin{equation} \label{1doptland}
\bw^{T} = \sqrt{\lH \left(\frac{\sigma}{c^{1/2}}-1\right)} \be, \hspace{.4in}
\bH =  \frac{\sqrt{\lW}}{\sqrt{\lH} \sigma} \bw^{T} (\bY - \bYb), \hspace{.4in}
b = \yb,
\end{equation}
is a global minimum. Thus, all vectors $\bw$ on the sphere given by $||\bw||_2^2 = \lH (\frac{\sigma}{c^{1/2}}-1)$ are optimal solutions. Furthermore, $\bh_i$, $i=1,\dots,M$, are all in the one-dimensional subspace spanned by $\bw$.  Thus the optimal solution of the UFM model provides a theoretical explanation for NRC1-NRC2.
(NRC3 is not meaningful for the one-dimensional case.)
Note that the  $\bhi$'s have a global zero mean and the norm of $\bhi$ is proportional to $|y_i - \yb|$.
\end{enumerate}
\subsection{General $n$-dimensional multivariate case}
In most cases of practical interest, we will have $c < \lmin$, so that $[\bA^{1/2}]_{j*} = \bA^{1/2}$ in Theorem \ref{gendim}. 
\begin{corollary} \label{conseq}
Suppose $0 < c < \lambda_{\text{min}}$. Then the global minima given by \eqref{optima1} have the following properties:

(i) All of the $d$-dimensional feature vectors $\bhi$, $i=1,\ldots,M$, lie in the $n$-dimensional subspace spanned by the $n$ rows of $\bW$. 
%Moreover, they have zero global mean.   
(ii) $\bW \bW^{T} = \sqrt{\frac{\lH}{\lW}} \left[\bSigma^{1/2} -\sqrt{c}\bI_n \right]$, 
(iii) $\lH ||\bH||_F^2= M \lW ||\bW||_F^2$, 
(iv) $\mathcal{L}(\bH, \bW, \bb)=nc/2+\sqrt{c} ||\bA^{1/2}||_F^2$, 
(v) $\bW \bH +\bb \mathbf{1}_M^T-\bY=-\sqrt{c}[\bSigma^{1/2}]^{-1}(\bY-\bYb)$. 
%(vi) $\bH^T \bH =\sqrt{\frac{\lW}{\lH}} \left[\bA^{1/2}[\bSigma^{1/2}]^{-1} (\bY -\bYb)\right]^T \left[\bA^{1/2}[\bSigma^{1/2}]^{-1} (\bY -\bYb)\right]$.
\end{corollary}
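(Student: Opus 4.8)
The plan is to treat \Cref{conseq} as a direct specialization of \Cref{gendim} to the regime $0 < c < \lmin$. First I would observe that in this regime every eigenvalue of $\bSigma$ satisfies $\lambda_j \geq \lmin > c$, so $j* = \max\{j : \lambda_j \geq c\} = n$ and therefore $[\bA^{1/2}]_{j*} = \bA^{1/2}$ with no columns zeroed out. Moreover $\bA = \bSigma^{1/2} - \sqrt{c}\,\bI_n$ has eigenvalues $\sqrt{\lambda_j} - \sqrt{c} > 0$, so $\bA$ is positive definite, its symmetric square root $\bA^{1/2}$ is well defined and of full rank $n$, and hence $\bW = (\lH/\lW)^{1/4}\bA^{1/2}\bR$ has rank $n$ (since $\bR$ is semi-orthogonal). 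With these explicit forms pinned down, all five claims reduce to algebraic substitution into \eqref{optima1}.

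For (i), I would write $\bH = \sqrt{\lW/\lH}\,\bW^T\big([\bSigma^{1/2}]^{-1}(\bY-\bYb)\big)$ and read off that each column $\bhi$ equals $\bW^T$ applied to the corresponding column of the $n\times M$ matrix in parentheses; hence every $\bhi$ lies in the column space of $\bW^T$, i.e. the span of the rows of $\bW$, which is genuinely $n$-dimensional by the rank computation above. For (ii), I would compute $\bW\bW^T = \sqrt{\lH/\lW}\,\bA^{1/2}\bR\bR^T\bA^{1/2}$ and use $\bR\bR^T = \bI_n$ together with the symmetry of $\bA^{1/2}$ to collapse this to $\sqrt{\lH/\lW}\,\bA = \sqrt{\lH/\lW}\,(\bSigma^{1/2}-\sqrt{c}\,\bI_n)$.

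The remaining parts rest on two identities: $(\bY-\bYb)(\bY-\bYb)^T = M\bSigma$ (from the definition of $\bSigma$) and $\bb\mathbf{1}_M^T = \byb\mathbf{1}_M^T = \bYb$. For (v), I would substitute (ii) into $\bW\bH = \sqrt{\lW/\lH}\,\bW\bW^T[\bSigma^{1/2}]^{-1}(\bY-\bYb)$; the prefactors cancel and the telescoping $\bSigma^{1/2}[\bSigma^{1/2}]^{-1} = \bI_n$ leaves $\bW\bH = (\bI_n - \sqrt{c}\,[\bSigma^{1/2}]^{-1})(\bY-\bYb)$, whence adding $\bb\mathbf{1}_M^T - \bY = \bYb - \bY$ gives the stated residual $-\sqrt{c}\,[\bSigma^{1/2}]^{-1}(\bY-\bYb)$. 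For (iii), I would form $\bH\bH^T = (\lW/\lH)\,\bW^T[\bSigma^{1/2}]^{-1}(M\bSigma)[\bSigma^{1/2}]^{-1}\bW$; the inner factor $[\bSigma^{1/2}]^{-1}\bSigma[\bSigma^{1/2}]^{-1} = \bI_n$ collapses this to $(\lW/\lH)\,M\,\bW^T\bW$, and taking traces yields $\lH\|\bH\|_F^2 = M\lW\|\bW\|_F^2$.

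Finally, for (iv) I would evaluate the three terms of $\mathcal{L}$ separately. The residual term uses (v): $\frac{1}{2M}\|\sqrt{c}\,[\bSigma^{1/2}]^{-1}(\bY-\bYb)\|_F^2 = \frac{c}{2M}\,\text{tr}([\bSigma^{1/2}]^{-1}(M\bSigma)[\bSigma^{1/2}]^{-1}) = \frac{c}{2M}\cdot Mn = \frac{nc}{2}$. By (iii) the two regularization terms are equal, so together they equal $\lW\|\bW\|_F^2 = \sqrt{c}\,\text{tr}(\bA) = \sqrt{c}\,\|\bA^{1/2}\|_F^2$, where the last equality again uses symmetry of $\bA^{1/2}$; summing gives the claimed value. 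The computations are all routine, so I do not expect a genuine obstacle; the only points needing care are the consistent bookkeeping of the quarter- and half-powers of $\lH/\lW$ and $c$, and repeatedly invoking that $\bSigma^{1/2}$ commutes with its inverse so the similarity transforms telescope to $\bI_n$. That bookkeeping, together with the positive-definiteness of $\bA$ that guarantees the span in (i) is truly $n$-dimensional, is really the only substantive content beyond plugging in \eqref{optima1}.
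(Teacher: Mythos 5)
Your proposal is correct and follows essentially the same route as the paper, which also proves (i), (ii), (v), and (iv) by direct algebraic substitution into the closed forms of Theorem \ref{gendim} (packaged in the paper as Lemma \ref{suppmat2}). The only cosmetic difference is part (iii): the paper obtains $\lH\|\bH\|_F^2 = M\lW\|\bW\|_F^2$ from the first-order stationarity identity $\frac{\partial\mathcal{L}}{\partial\bH}\bH^T = \bW^T\frac{\partial\mathcal{L}}{\partial\bW}$ (so it holds at every critical point), whereas you verify it by computing $\bH\bH^T = (\lW/\lH)M\,\bW^T\bW$ at the global minimum, which is equally valid for the statement as given.
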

From Theorem \ref{gendim} and Corollary \ref{conseq}, we make the following observations:
\begin{enumerate}
    \item Most importantly, the global minima in the UFM solution match the empirical properties (NRC1)-(NRC3) observed in Section 3. In particular, the theory precisely predicts NRC3, with $\gamma = c$. This confirms that the UFM model is an appropriate model for neural regression. 
    \item Unlike the one-dimensional case, the feature vectors are no longer colinear with any of the rows of $\bW$. Moreover, after rotation and projection (determined by the semi-orthogonal matrix $\bR$), the angles between the target vectors in $\bY - \bYb$ do not in general align with the angles between the feature vectors in $\bH$. However, if the target components are uncorrelated, so that $\bSigma$ is diagonal, then $\bA$ is also diagonal and there is alignment between $\bH$ and $\bY - \bYb$.
\end{enumerate}

Theorem \ref{gendim} also provides insight into the ``strong regularization'' case of $c > \lmin$. In this case, the rows of $\bW$ and the feature vectors $\bH$ in the global minima belong to a subspace that has dimension even smaller than $n$, specifically, to dimension $j* <n$. To gain some insight, assume that the target components are uncorrelated so that $\bSigma$ is diagonal and $\lambda_j = \sigma_j^2$, i.e., $\sigma_j^2$ is the variance of the $j$-th target component. Then for a target component for which $c > \sigma_j^2$, the corresponding row in $\bW$ will be zero and the component prediction will be $\hat{\by}_i^{(j)} = \byb^{(j)}$ for all examples $i=1,\ldots,M$. For more details, we refer the reader to Section \ref{specialcases} in the appendix.

\subsection{Removing regularization}
In the previous theorem and corollary, we assumed the presence or L2 regularization for $\bW$ and $\bH$, that is, we assumed $\lW >0$ and $\lH >0$. Now we explore the structure of the solutions to the UFM when $\lW = \lH = 0$.  In this case, the UFM model is modeling the real problem with $\lambda_{WD}$ equal to or close to zero. The loss function becomes:
\begin{equation}
L(\bW,\bH) = \frac{1}{2M} || \bW \bH - \bY ||_F^2.
\end{equation}
For this case, we do not need bias since we can obtain zero loss without it. 
%Recall $\bY$ is $n \times M$ dimensional, $\bH$ is $d \times M$-dimensional, and $\bW$ is $n \times d$-dimensional. We assume throughout that $d \geq n$. 
\begin{theorem} \label{no_regularization}
    The solution $(\bW,\bH)$ is a global minimum if and only if $\bW$ is any $n \times d$ full rank matrix and 
    \begin{equation} \label{noreg}
        \bH = \bW^+ \bY + (\bI_d - \bW^+\bW)\bZ,
    \end{equation}
where $\bW^+$ is the pseudo-inverse of $\bW$ and $\bZ$ is any $d \times M$ matrix.
Consequently, when there is no regularization, for each full-rank $\bW$
there is an infinite number of global minima $(\bW,\bH)$ that do not collapse to any subspace of $\Rd$.
\end{theorem}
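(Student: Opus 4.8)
The plan is to reduce the optimization statement to a purely algebraic one about the solution set of the linear matrix equation $\bW\bH = \bY$, and then apply the standard Moore--Penrose parametrization of that set. First I would note that $L(\bW,\bH) = \frac{1}{2M}\|\bW\bH - \bY\|_F^2 \ge 0$ and that the value $0$ is attainable (for instance, take $\bW = [\bI_n \; \bzero]$ and let $\bH$ have top $n$ rows equal to $\bY$ and the remaining rows zero, so that $\bW\bH = \bY$). Hence the global minimum value of $L$ is exactly $0$, and $(\bW,\bH)$ is a global minimum if and only if $\bW\bH = \bY$. This converts the theorem into a statement about which pairs solve that equation.

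Next I would establish necessity of full rank. Since $\bY$ is assumed to have full rank $n$, its columns span all of $\mathbb{R}^n$ (using $M \ge n$), while every column of $\bW\bH$ lies in the column space of $\bW$. Therefore $\bW\bH = \bY$ forces $\mathrm{rank}(\bW) \ge n$, and as $\bW$ is $n\times d$ this means $\bW$ has full rank $n$. Equivalently, a rank-deficient $\bW$ would confine $\bW\bH$ to a proper subspace of $\mathbb{R}^n$, making $\|\bW\bH - \bY\|_F^2 > 0$; so any global minimizer must have full-rank $\bW$. With such a $\bW$ fixed, I would then solve $\bW\bH = \bY$ for $\bH$: because $\bW$ has full row rank, $\bW\bW^+ = \bI_n$, so $\bW^+\bY$ is a particular solution, while the identity $\bW\bW^+\bW = \bW$ gives $\bW(\bI_d - \bW^+\bW) = \bzero$, showing that every term $(\bI_d - \bW^+\bW)\bZ$ lies in the left null space of $\bW$ and may be added without changing the product. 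For completeness, given any solution $\bH$, substituting $\bZ = \bH$ into the claimed formula and using $\bW^+\bY = \bW^+\bW\bH$ recovers $\bH$ exactly, so every solution has the stated form; this gives both directions of the equivalence.

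Finally, the \emph{consequently} clause follows by inspecting the solution set for a fixed full-rank $\bW$. The matrix $\bI_d - \bW^+\bW$ is the orthogonal projection onto $\mathrm{Null}(\bW)$, which has dimension $d - n > 0$ whenever $d > n$. Writing $\bh_i = \bW^+\by_i + (\bI_d - \bW^+\bW)\bz_i$, the free term ranges over a $(d-n)$-dimensional space, yielding infinitely many distinct minimizers whose feature vectors need not lie in any common $n$-dimensional subspace, so no collapse occurs. The argument is essentially routine; the only points requiring a little care are the completeness check (that the pseudoinverse formula captures \emph{every} solution, not just one coset representative) and the rank argument underlying necessity, but neither presents a genuine obstacle.
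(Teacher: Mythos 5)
Your proposal is correct and follows essentially the same route as the paper: reduce to the solvability of $\bW\bH=\bY$, use full rank of $\bY$ to force full rank of $\bW$, and invoke the Moore--Penrose parametrization of the solution set (which you verify in slightly more detail than the paper, which simply cites the standard least-squares fact). One trivial slip: the columns of $(\bI_d-\bW^+\bW)\bZ$ lie in the null space $\{\bv\in\Rd:\bW\bv=\bzero\}$ of $\bW$, not its \emph{left} null space, but your displayed identity $\bW(\bI_d-\bW^+\bW)=\bzero$ is exactly right.
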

From Theorem \ref{no_regularization}, when there is no regularization, the feature vectors do not collapse. Moreover,  
any full rank $\bW$ provides an optimal solution. For example, for $n=2$, the two rows of $\bW$ can have any angle between them except angle 0 and angle 180. 
This is very different from the results we have for $\lH, \lW > 0$, in which case $\bW$ depends on the covariance matrix $\bSigma$.
Note that if we set $\lH =\lW$ and let $\lH \rightarrow 0$, then the limit of $\bW$ still depends on $\bSigma$.
Thus there is a major discontinuity in the solution when $\lH, \lW$ goes to zero. We also observed this phase shift in the experiments (see Figure \ref{small_reg}).
We can therefore conclude that neural regression collapse is not an intrinsic property of neural regression alone. The geometric structure of neural regression collapse is due to the inclusion of regularization in the loss function.

\subsection{Empirical results with UFM assumptions}
\label{sec:case2_exp}
We also provide empirical results for the case when we train with the same form of regularization as assumed by the UFM model. Specifically, we turn off weight decay and add an L2 penalty on the last-layer features $\bhi$, $i=1,\ldots,M$, and on the layer linear weights $\bW$.  Additionally, we omit the ReLU activation function in the penultimate layer, allowing the feature representation produced by the feature extractor to take any value, thus reflecting the UFM model.
For these empirical results, when evaluating NRC3, rather than searching for $\gamma$ as in the definition of NRC3, 
we use the exact value of $\gamma$ given by Theorem \ref{gendim}, that is,  $\gamma = \lW \lH = c$. 

%Next, we trained the simulated UFM, where weight decay is turned off and L2 regularization is applied instead, to validate our theoretical results in Section \ref{sec: theoretical}. Specifically, we trained four-layer MLP models on the Reacher and Swimmer datasets with varying values of $\lH$ and $\lW$, and evaluated the model's NRC metrics. Notably, in Theorem \ref{gendim}, we derived the closed-form solution for the weight matrix of the regression layer, $\bW$,  which satisfies $\bW \bW^T = \lH \left( \bSigma^{1/2}/ \sqrt{\lH \lW} -\bI_2 \right)$. To empirically verify these theoretical results, we computed NRC3 as the distance between the empirical and theoretical values of $\bW \bW^T$, represented as:
%\[
%\text{NRC3} = \left|\left| \frac{\bW\bWT}{||\bW\bWT||_F} - \frac{ \bSigma^{1/2} - \sqrt{\lH \lW} \bI_n }{||\bSigma^{1/2} - \sqrt{\lH \lW} \bI_n||_F }\right|\right|_F^2.
%\]

\begin{figure*}[h]
    \centering
    \includegraphics[width=1.0\linewidth]{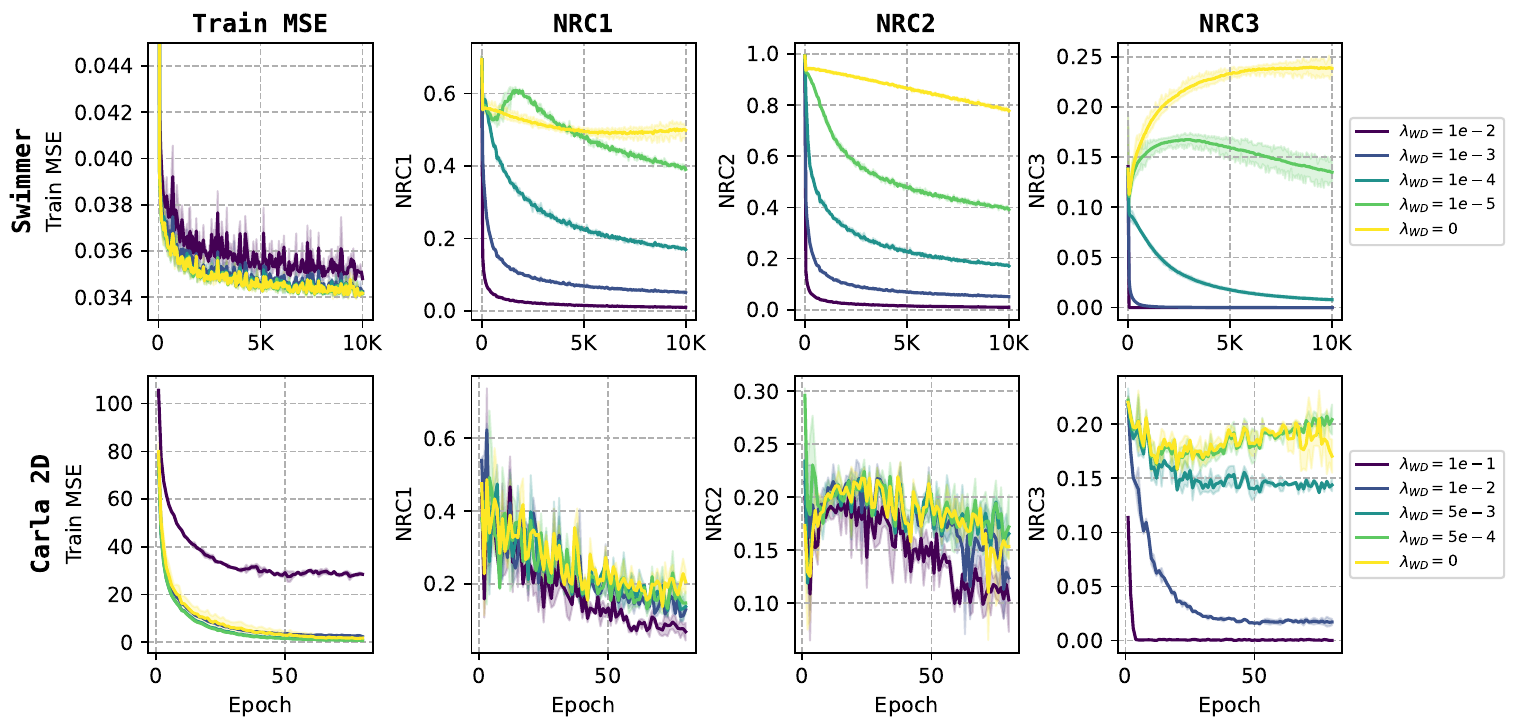}
    \caption{Empirical results with UFM assumption where L2 regularization on $\bH$ and $\bW$ are used instead of weight decay.}
    \label{fig:nc_case2}
\end{figure*}

Figure \ref{fig:nc_case2} illustrates training MSE and NRC metrics for varying values of $c$. 
(For simplicity, we only considered the case where $\lW = \lH$. Appendix \ref{sec:b_exp} contains results on remaining datasets.)
As we are considering a different model and loss function for these empirical experiments, convergence occurs more quickly and so we train for a smaller number of epochs. 
We can conclude that the UFM theory not only accurately predicts the behavior of the standard L2 regularization approach with weight-decay for all parameters (Figure \ref{small_reg}), 
but also accurately predicts the behavior when regularization follows the UFM assumptions (Figure \ref{fig:nc_case2}). 
%In particular, with relatively light to strong regularization on $\bW$ and $\bH$, neural collapse occurs, and NRC3 converges to a very small value. 
%Specifically, for $ \lH, \lW \ge 1e-3$, the NRC3 metric reduced to less than 0.01 for both Reacher and Swimmer datasets, which verifies the results in Theorem \ref{gendim}. 

 \section{Conclusion}
 \label{sec5}
 We provided strong evidence, both empirically and theoretically, of the existence of neural collapse for multivariate regression. 
 This extension is significant not only because it broadens the applicability of neural collapse to a new category of problems but also because it suggests that the phenomena of neural collapse could be a universal behavior in deep learning.  However, it is worth acknowledging that while we have gained a better understanding of the model behavior of deep regression models in the terminal phase of training, we have not addressed the connection between neural regression collapse and model generalization. This crucial aspect remains an important topic for future research.

\clearpage
\begin{ack}
    This work is submitted in part by the NYU Abu Dhabi Center for Artificial Intelligence and Robotics, funded by Tamkeen under the Research Institute Award CG010.
    
    This work is partially supported by Shanghai Frontiers Science Center of Artificial Intelligence and Deep Learning at NYU Shanghai. Experimental computation was supported in part through the NYU IT High-Performance Computing resources and services.
\end{ack}

\bibliography{reference}{}
\bibliographystyle{plainnat}

\clearpage
\appendix

\section{Experimental details for Section \ref{sec:exp_case1}}
\label{sec:a_exp}

\subsection{MuJoCo}
\label{appendix:mujoco}

For Reacher and Swimmer environments, the datasets come from an open-source repository \citep{gallouedec2024jack} and contain expert data collected by a policy trained by PPO \citep{schulman2017proximal}. The hopper dataset is part of the D4RL datasets \citep{fu2020d4rl}, a well-acknowledged benchmark for offline reinforcement learning research. Table \ref{table:mujoco_mlp} summarizes all model hyperparameters and experimental settings used in section \ref{sec:exp_case1}. In all experiments, we train the models long enough so that the model weights converge. We provide more details below about the MuJoCo datasets employed and some hyperparameter settings depending on each dataset.

\begin{table}[h!]
\caption{Hyperparameter settings for experiments with weight decay on MuJoCo datasets.}
\label{table:mujoco_mlp}
\vskip 0.15in
\begin{center}
\begin{small}
\begin{tabular}{cll}
\toprule
    & \textbf{Hyperparameter} & \textbf{Value}  \\
\midrule
    & Number of hidden layers   & $3$ \\
  Model Architecture  & Hidden layer dimension & $256$  \\ 
    & Activation function & ReLU  \\ 
    & Number of linear projection layer ($\bW$) & 1 \\
\midrule
    & Epochs & 1.5e6, Reacher \\
    &   & 1e6, Swimmer \\
    &   & 2e5, Hopper\\
    & Batch size & 256 \\
    & Optimizer & SGD \\
    & Learning rate & 1e-2 \\
Training    & Weight decay & 1.5e-3, Reacher \\
    &   & 1e-2, Swimmer \\
    &   & 1e-2, Hopper \\
    & Seeds & 0, 1, 2 \\
    & Compute resources & Intel(R) Xeon(R) Platinum 8268 CPU \\
    & Number of CPU compute workers & 4 \\
    & Requested compute memory & 16 GB \\
    & Approximate average execution time & 16 hours\\
    
\bottomrule
\end{tabular}
\end{small}
\end{center}
\end{table}

\paragraph{MuJoCo environment descriptions} We use expert data obtained from \citet{gallouedec2024jack} and \citet{fu2020d4rl} for the Reacher, Swimmer, and Hopper environments. Reacher is a robot arm with two joints; the goal of this environment is to control the tip of this arm to reach a randomly generated target point in a 2-dimensional plane. Swimmer is a linear-chain-like robot with three different body parts connected by two rotors; the goal of Swimmer is to move forward on a 2-dimensional plane as fast as possible. Similarly, Hopper is a 2-dimensional one-legged robot with four body parts, and the goal is to hop forward as fast as possible. All three simulated robots are controlled by applying torques on the joints connecting the body parts. Those torques are therefore the actions. In creating the datasets, online reinforcement learning was used to find expert policies \citep{gallouedec2024jack,fu2020d4rl}. To generate the offline expert datasets, the expert policy is then applied to the environment to generate episodes consisting of states $\bx_i$ and actions (that is, targets) $\byi$. The state $\bx_i$ includes robot positions, and angle, velocity, and angular velocity of all robot joints, and the targets $\byi$ include the torques on joints. 

\paragraph{Low data regime} Using regularized regression to train a neural network with expert state-action data is often referred to as {\em imitation learning}. In this paper, we follow the common practice of using relatively small MLP architectures for the MuJoCo environments \citep{tarasov2022corl}. In imitation learning, it is desirable to learn a good policy with as little expert data as possible. We therefore train the models with subsets of the expert data in the datasets for each of the three environments. Specifically, we use 20, 1, and 10 episodes (complete expert demonstrations) for Reacher, Swimmer, and Hopper, respectively. This corresponds to 1,000, 1,000, and 10,000 data points for the three environments, respectively. For each environment, we also take a subset of the full validation (test) dataset and keep the number of data $20\%$ of training data size. As we are using fewer full expert demonstrations for Swimmer, we increase the weight decay value to further mitigate overfitting in this case.

%Collecting expert data can be inefficient, expensive, and even dangerous in some scenarios, like real-world robots and autonomous driving. As a result, the RL study always aims to develop methods that efficiently learn a good policy with as little expert data as possible (cite offline). To reflect the nature of RL study, we also limit the expert data available for reacher, swimmer, and hopper environments in our experiments. Besides, many offline RL methods employ only small and simple model architectures as a policy, like 3-layer MLPs, when learning from datasets containing raw states and actions (cite clean rl). As is also our experimental setup, regularizing such small models plays a reasonable role only when the accessible data is also small. Therefore, we take a subset of the whole dataset (cite jat, d4rl) for each environment. This results in 20 and 1 episode of full demonstrations (1,000 data points) for Reacher and Swimmer respectively. And due to more difficulty of the Hopper environment, we take more data, made of 10 episodes of demonstrations (10,000 data points). 

\subsection{CARLA and UTKface}
The Carla dataset is collected by recording surroundings via automotive cameras, while a human driver operates a vehicle in a simulative urban environment \citep{Codevilla2018}. The recorded images are states $\bxi$ of the vehicle and the expert control from the driver, which includes speed and steering angles, serves as actions $\by_i \in[0, 85]\times  [-1, 1]$ in the dataset. A well-trained model on this dataset is expected to drive the vehicle safely in the virtual environment. The UTKface dataset consists of full-face photographs $\bxi$ of humans whose ages range from 1 to 116 \citep{zhifei2017cvpr}. The goal of this dataset is to accurately predict the age $\byi$ of the person in each photo.

In both cases, ResNet network \citep{he2016deep} is employed as the model backbone to extract image features. And the full dataset is used for training both models, as learning a good feature extractor from visual inputs requires a large number of images \citep{he2016deep, sun2017revisiting}. To adapt ResNet architecture, a native of classification tasks, to regression tasks, we replace the last layer classifier with a fully connected layer to map learned features to the continuous targets. Depending on the task complexity, we select ResNet18 for Carla and ResNet34 for UTKface. The experimental setup for CARLA 1D/2D and UTKface datasets are summarized in Table \ref{table:carla_resnet18}. 

\begin{table}[h!]
\caption{Hyperparameters of ResNet for Carla and UTKface datasets.}
\label{table:carla_resnet18}
\vskip 0.15in
\begin{center}
\begin{small}
\begin{tabular}{cll}
\toprule
    & \textbf{Hyperparameter} & \textbf{Value}  \\
\midrule
    & Backbone of hidden layers & ResNet18, Carla \\
  Architecture    & & ResNet34, UTKface \\
    & Last layer hidden dim & $512$  \\ 
    & Final layer activation function & ReLU  \\ 
\midrule
    & Epochs & 100 \\
    & Batch size & 512 \\
    & Optimizer & SGD \\
    & Momentum & 0.9 \\
    & Learning rate & 0.001 \\
Training     & Multistep\_gamma & 0.1 \\
    & Seeds & 0, 1\\
    & Compute resources & NVIDIA A100 8358 80GB \\
    & Number of compute workers & 8 \\
    & Requested compute memory & 200 GB \\
    & Approximate average execution time & 42 hours\\
    
\bottomrule
\end{tabular}
\end{small}
\end{center}
\end{table} 

\subsection{Computing NRC3}
\label{appendix: nrc3}
For univariate regression, note that $\bW\bW^T=||\bw||_2^2$, and $\bSigma^{1/2} - \gamma^{1/2} \bI_n = \sigma - \gamma^{1/2}$, where $\bw$ is a vector of the final linear layer of the model; $\sigma$ is the standard deviation of the one-dimensional targets. Thus, NRC3 is trivially zero. Alternatively, to align with the theory in Section \ref{sec:1d_theory}, one may define one-dimensional NRC3 as:
\[
\text{NRC3} = \left|||\bw||_2^2 - \gamma_2(\sigma - \gamma_1^{1/2}) \right|^2 \to 0,
\]
for some $\gamma_1 \in (0, \lambda_{\min})$ and $\gamma_2 > 0$. However, this is also trivially true, e.g. for any $\gamma_1 \in (0, \sigma^2)$ (note that $\lambda_{\min}=\sigma^2$) and $\gamma_2 = ||\bw||_2^2(\sigma - \gamma_1^{1/2})^{-1}$ after parameters $\bw$ become stable. Therefore, we found NRC3 for univariate regression to be not as meaningful, and therefore omitted the corresponding plots.

For multivariate regression, we run all experiments long enough in order to ensure that the training has entered the terminal phase of training as measured by $R^2$ (see Figure \ref{NRC_Case1}). After training, we extract the $\bW$ matrix and identify $\gamma$ that minimizes NRC3 for that specific $\bW$. This $\gamma$ was then used to compute the NRC3 metric for all $\bW$ matrices during training, resulting in the NRC3 curves shown in Figure \ref{NRC_Case1}. Figure \ref{optimal_c} visualizes NRC3 as a function of $\gamma$ for the final trained $\bW$.

In Appendix \ref{appendix:unique_gamma}, we show that under a condition that is satisfied if $\lambda_{WD}$ is reasonably large, a non-normalized version of NRC3, see \eqref{nrc3gamma}, is convex and it has a unique minimum. Since we employ relatively large weight decay for experiments in Figure \ref{NRC_Case1}, the condition of Theorem \ref{thm:unique_gamma} is satisfied, and thus Figure \ref{optimal_c} displays a unique optimal $\gamma$ for all datasets.

\subsection{Results for small weight decay}
\label{appendix: full_fig4}
Figure \ref{fig:fig4_full_mujoco} and Figure \ref{fig:fig4_full_carla} include results on studying small weight decay values for all datasets. When weight decay approaches zero, NRC1-3 typically become larger, compared with NRC1-3 obtained with larger weight decay values.

Particularly, when there is no weight decay, we observe that NRC1-3 has a strong tendency to converge (There is a relatively small amount of collapse since gradient descent tends to seek solutions with small norms.), while the test MSE increases on small MuJoCo datasets. Theorem \ref{no_regularization} provides some insight: when there is no regularization, there is an infinite number of non-collapsed optimal solutions under UFM; whereas Theorem \ref{gendim} shows that when there is regularization, all solutions are collapsed. When there is regularization, we are seeking a small norm optimal solution, which leads to NRC1-3.

\begin{figure}[hb]
    \centering
    \includegraphics[width=1\linewidth]{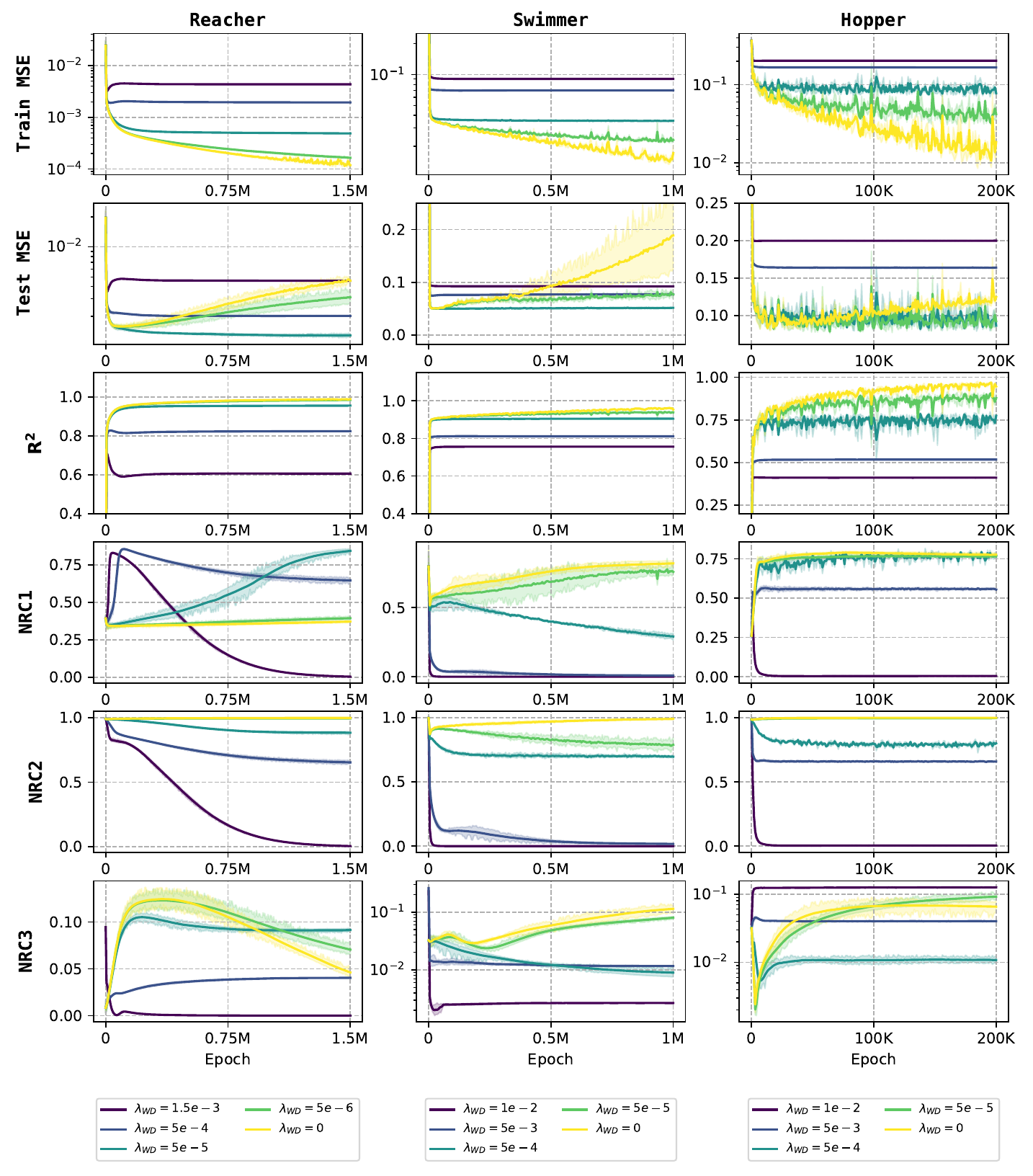}
    \caption{Train/Test MSE, $R^2$, and NRC1-3 under different weight decays for MuJoCo datasets.}
    \label{fig:fig4_full_mujoco}
\end{figure}

\begin{figure}
    \centering
    \includegraphics[width=1\linewidth]{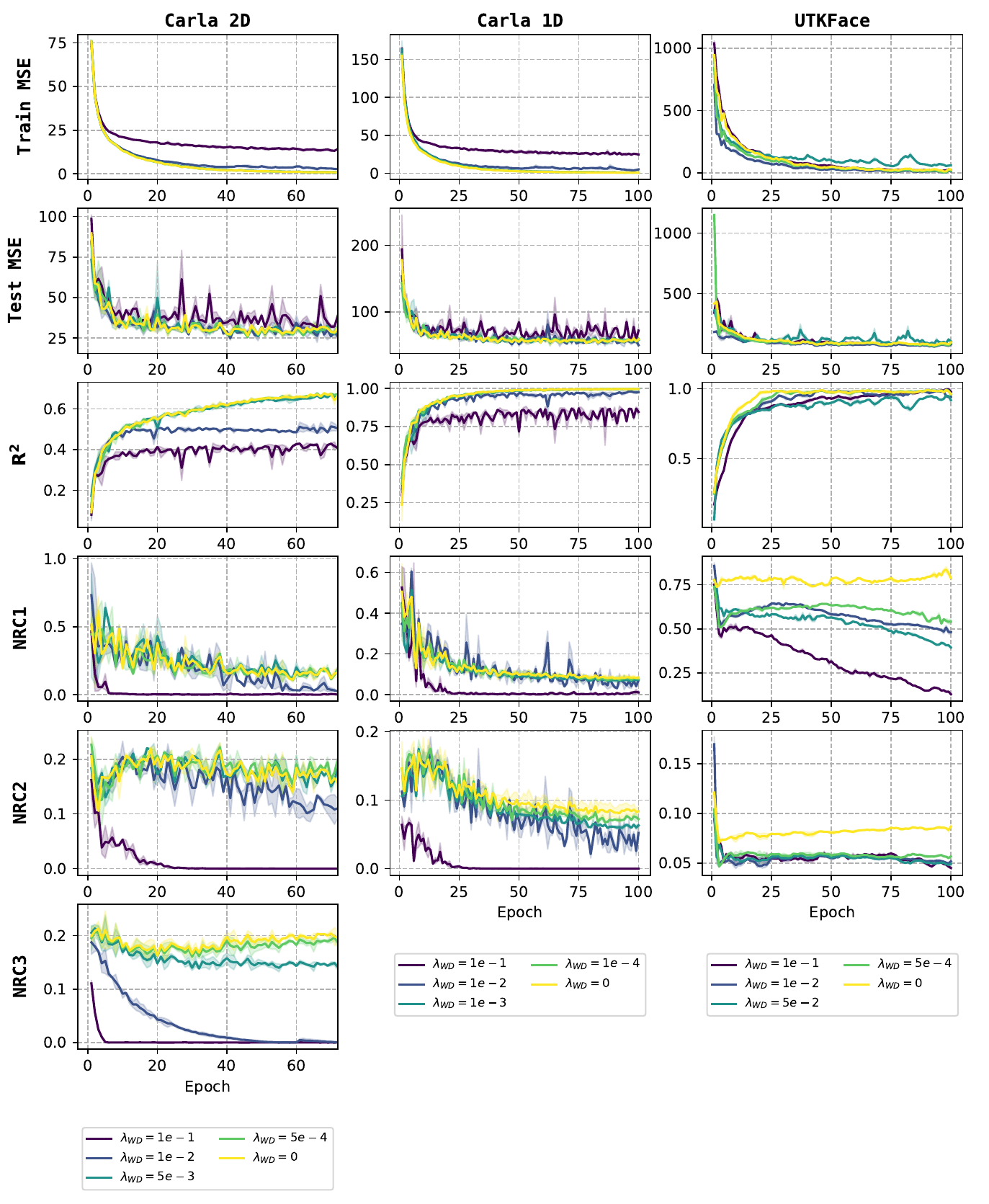}
    \caption{Train/Test MSE, $R^2$, and NRC1-3 under different weight decays for Carla and UTKFace datasets.}
    \label{fig:fig4_full_carla}
\end{figure}

\clearpage
\section{Additional experimental results}\label{sec:b_exp}

In this section, we delve into additional experiments aimed at further exploring the phenomena of neural regression collapse.

\paragraph{Complete experiments under UFM assumption} As studied in Section \ref{sec:case2_exp}, we run experiments that align with the UFM assumption and verify the theoretical NRC1-3. In addition to the L2 regularization on both $\bH$ and $\bW$, the model thus is empowered with more expressive learned feature $\bH$, e.g. removing ReLU in the penultimate layer to allow negative feature values and incorporating more training data. Complete results for all six datasets are shown in Figure \ref{fig:case2_full}. As we can see, NRC1-3 do not converge to very low values as is when regularization is stronger. This confirms our NRC theory in Section \ref{sec: theoretical} and also corresponds to the results observed in Figure \ref{fig:fig4_full_mujoco} and Figure \ref{fig:fig4_full_carla} where we apply weight decay to all model parameters in practice.

\begin{figure*}[h]
    \centering
    \includegraphics[width=1.0\linewidth]{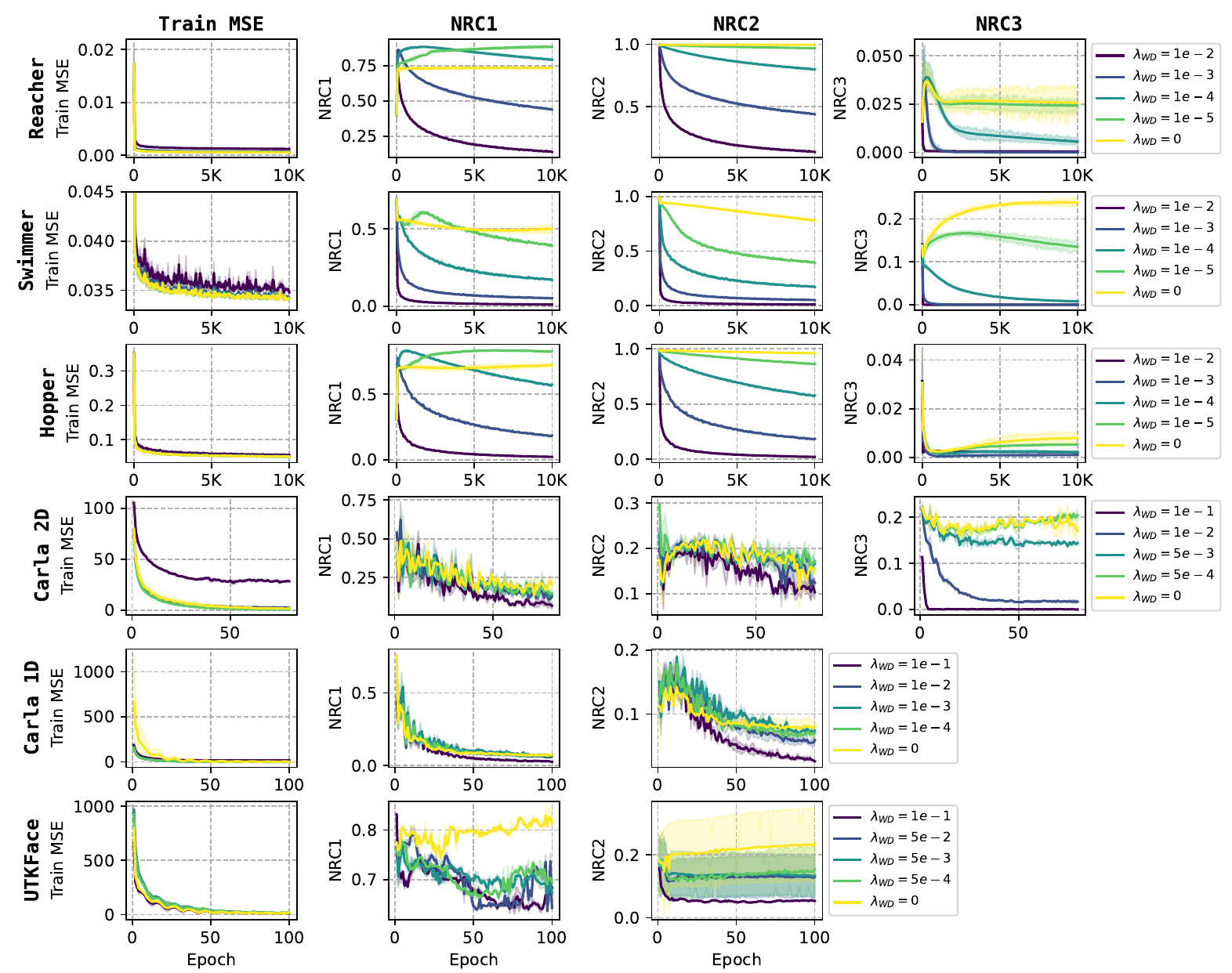}
    \caption{Empirical results with UFM assumption where L2 regularization on $\bH$ and $\bW$ are used instead of weight decay for all six datasets.}
    \label{fig:case2_full}
\end{figure*}

\paragraph{Norms of $\bH$ and $\bW$} As demonstrated in Corollary \ref{conseq}(iii), the norms of the last layer weight matrix and the feature matrix depend on the ratio of regularization parameters $\lH/\lW$. In Figure \ref{fig:wh_norm}, we empirically demonstrate how the regularization parameters impact the norms of the last layer weight matrix and the feature matrix. Specifically, we fixed $\lW=0.01$ and varied the value of $\lH$. We observe that with increasing $\lH$, the feature norm monotonically decreases, and the norms of the rows of the weight matrix monotonically increase, which is consistent with our theoretical result.

\begin{figure*}[h]
    \centering
    \includegraphics[width=0.85\linewidth]{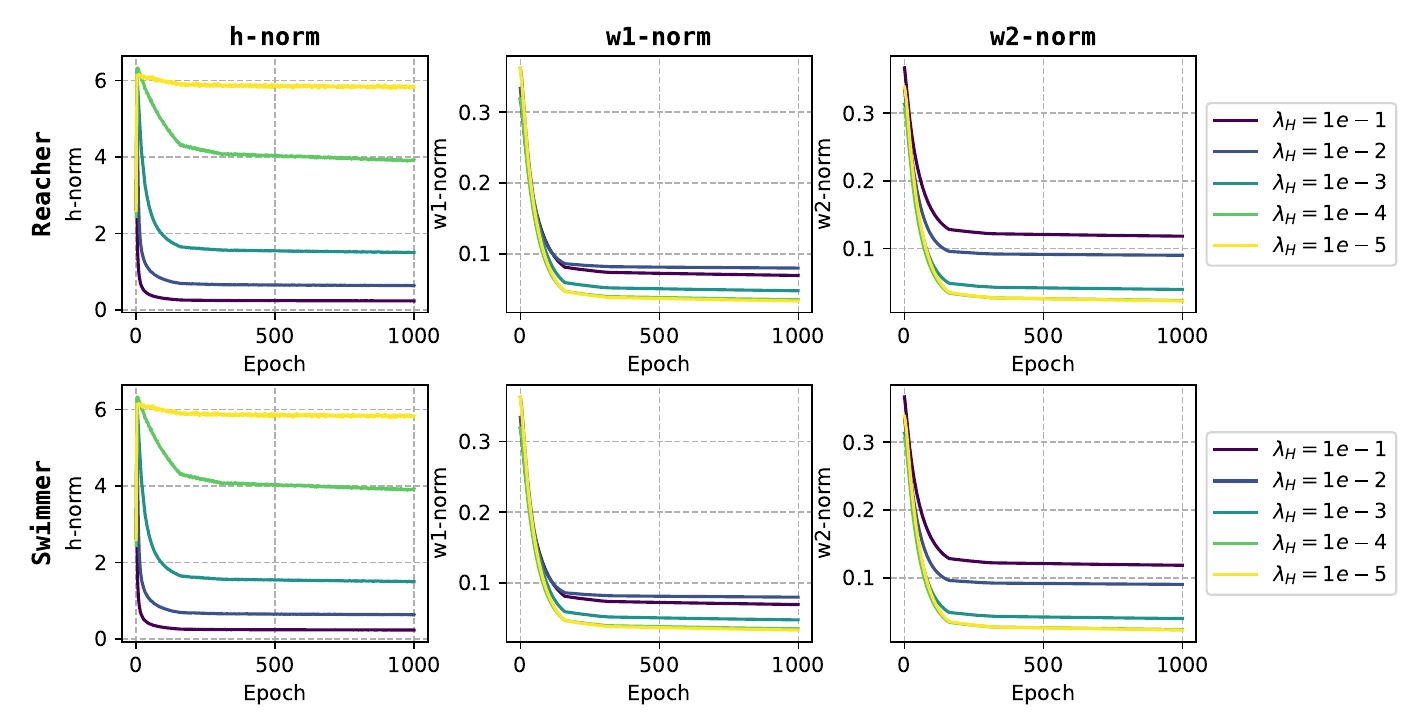}
    \caption{Comparison of the norms of $\mathbf{W}$ and $\mathbf{H}$ with fixed $\lW$ and varying $\lH$. The columns from left to right represent the model's average feature norm and the norms for $\mathbf{w}_1$ and $\mathbf{w}_2$, respectively.}
    \label{fig:wh_norm}
\end{figure*}

\begin{figure*}[h]
    \centering
    \includegraphics[width=0.9\linewidth]{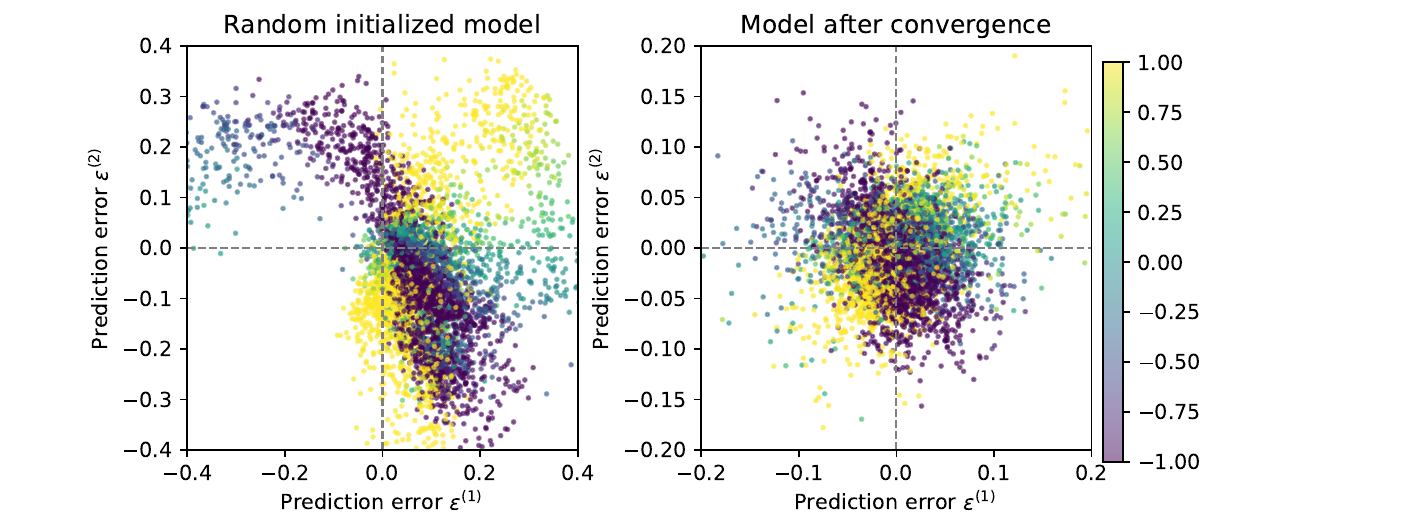}
    \caption{Residual errors $\varepsilon^{(2)}$ versus $\varepsilon^{(1)}$ for both the randomly initialized model and the trained model after convergence on the Reacher dataset. The color of the points indicates the ratio ${z}^{(2)}/{z}^{(1)}$. 
    }
    \label{fig:eps}
\end{figure*}

\paragraph{Connection to whitening} In statistical analysis, whitening (or sphering) refers to a common preprocessing step to transform random variables to orthogonality. A whitening transformation (or sphering transformation) is a linear transformation that transforms a set of vectors of random variables with a known covariance matrix into a new set of vectors of random variables such that the components of the transformed vectors are uncorrelated and have unit variances. The transformation is called ``whitening'' because it changes the input vector to white noise. Due to rotational freedom, there are infinitely many possible whitening methods, and consequently there is a diverse range of whitening procedures in use such as PCA whitening, Cholesky whitening, and Mahalonobis or zero-phase component analysis (ZCA) whitening among others. In the latter, the matrix used for the procedure of whitening is $\bW^{ZCA}=[\bSigma^{1/2}]^{-1}$, where $\bSigma$ is the covariance matrix of the original data. Interestingly, $\bW^{ZCA}$ is obtained as the whitening transformation that minimizes the MSE between the original and the whitened data, which is appealing since in many applications, it is desirable to remove correlations with minimal additional adjustment, with the aim that the transformed data remains as similar as possible to the original data.

From Corollary \ref{conseq}(v), 
we get the residual error of the regression model, $\mathbf{E} \in \mathbb{R}^{n \times M}$, which can be formulated as 
\[
\mathbf{E}=-\sqrt{c} [\bSigma^{1/2}]^{-1} (\bY-\bYb).
\]
The residual error matrix is proportional to the ZCA-whitened centered target matrix. If we denote the ZCA-whitened target matrix by 
\[
\mathbf{Z}^{ZCA} = [\bSigma^{1/2}]^{-1} (\bY-\bYb),
\]
we have that 
\[
M^{-1} \mathbf{Z}^{ZCA} (\mathbf{Z}^{ZCA})^T=\bI_n.
\]
As a consequence, the residual error matrix can be represented as $\mathbf{E}=-\sqrt{c} \mathbf{Z}^{ZCA}$, from which a significant implication arises. After the model converges, the residual error matrix will be mean zero white noise with sample covariance matrix given by $M^{-1}\mathbf{E}\mathbf{E}^T=c \bI_n$,  i.e., the errors are uncorrelated across the $n$ target dimensions and each has variance equal to $c$. 

For any given sample,  upon examining individual dimensions, it becomes apparent that the  $j$-th dimension of the residual error, $\varepsilon^{(j)}$, is proportional to the $j$-th dimension of the standardized target variable ${z}^{(j)}$.  \looseness=-1 
We trained a 4-layer MLP model on the Reacher dataset for which the target variable is 2-dimensional to validate the above-mentioned properties. We created scatter plots of the residual error $\varepsilon^{(2)}$ versus $\varepsilon^{(1)}$ for both the randomly initialized model and the trained model after convergence. Figure \ref{fig:eps} illustrates these scatter plots, with the color of the samples indicating ${z}^{(2)}/{z}^{(1)}$. As observed from the plot, after the model converges, the residual errors reduce to white noise. Additionally, from the right plot (for the model after convergence), we observe that the plot exhibits a circular pattern where the color of the samples gradually changes as they move from one quadrant to another. This indicates the consistency between $\varepsilon^{(2)}/\varepsilon^{(1)}$ and ${z}^{(2)}/{z}^{(1)}$, which is consistent with the result in Corollary \ref{conseq}(v). 

\clearpage
\section{Supplementary lemmas}
\label{sec:c_exp}
Let us recall the form of the objective:
\begin{equation} \label{formoflossnew}
\mathcal{L}(\bH, \bW, \bb)=\frac{1}{2M} ||\bW \bH + \bb \mathbf{1}_M^{T} - \bY||_F^2 + \frac{\lH}{2M} ||\bH||_{F}^2 + \frac{\lW}{2} ||\bW||_F^2,
\end{equation}
where $\mathbf{1}_M^T=[1\cdots 1]$ and $\lH, \lW>0$  regularization constants.

In Lemma \ref{suppmat1}, we demonstrate that if $(\bH, \bW, \bb)$ is critical for \eqref{formoflossnew},  then $\bW$ can be written as a closed-form function of $\bH$ and the residual error. In an analogous way, $\bH$ can be written as a closed-form function of $\bW$ and the residual error. Furthermore, $\bb=\byb$, where $\byb$ is the mean of the targets. In addition, we provide the identity that connects the matrix norms of the two, see (iii) below.
\begin{lemma} \label{suppmat1}
i) If $(\bH, \bW, \bb)$ is a critical point of \eqref{formoflossnew}, then
\begin{align*}
    \bH &= - \lH^{-1} \bW^{T} (\bW \bH+\bYb-\bY),
    \\
    \bW &= -\frac{\lW^{-1}}{M} (\bW\bH+\bYb-\bY) \bH^{T},
    \\
    \bb &= \byb.
\end{align*}
ii) If $(\bH, \bW, \bb)$ is a critical point of \eqref{formoflossnew}, for fixed $(\bH, \bW)$, $\bb=\byb$ minimizes $\mathcal{L}(\bH, \bW, \bb)$.
\\
iii) $\lH ||\bH||_F^2=M \lW ||\bW||_F^2$.
\end{lemma}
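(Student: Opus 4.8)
The plan is to treat \eqref{formoflossnew} as a smooth function of the three block variables $(\bH,\bW,\bb)$ and impose first-order stationarity. Writing the residual as $\mathbf{E} := \bW\bH + \bb\mathbf{1}_M^{T} - \bY$, the three partial gradients are
\begin{align*}
\nabla_{\bH}\mathcal{L} &= \tfrac{1}{M}\bW^{T}\mathbf{E} + \tfrac{\lH}{M}\bH, \\
\nabla_{\bW}\mathcal{L} &= \tfrac{1}{M}\mathbf{E}\bH^{T} + \lW\bW, \\
\nabla_{\bb}\mathcal{L} &= \tfrac{1}{M}\mathbf{E}\mathbf{1}_M,
\end{align*}
obtained from the standard identities $\nabla_{\mathbf{X}}\|\mathbf{A}\mathbf{X}-\mathbf{B}\|_F^2 = 2\mathbf{A}^{T}(\mathbf{A}\mathbf{X}-\mathbf{B})$ and $\nabla_{\mathbf{X}}\|\mathbf{X}\mathbf{C}-\mathbf{B}\|_F^2 = 2(\mathbf{X}\mathbf{C}-\mathbf{B})\mathbf{C}^{T}$. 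Setting each gradient to $\bzero$ and clearing factors of $1/M$ yields the stationarity conditions $\bW^{T}\mathbf{E} + \lH\bH = \bzero$, $\;\mathbf{E}\bH^{T} + M\lW\bW = \bzero$, and $\mathbf{E}\mathbf{1}_M = \bzero$.

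The crux, and the only non-mechanical step, is establishing $\bb = \byb$; this does not follow from the $\bb$-condition alone, since that condition reads $\bW\bH\mathbf{1}_M + M\bb = \bY\mathbf{1}_M = M\byb$ and still contains the coupling term $\bW\bH\mathbf{1}_M$. To remove it I would right-multiply the $\bH$-condition $\bW^{T}\mathbf{E} + \lH\bH = \bzero$ by $\mathbf{1}_M$ and substitute $\mathbf{E}\mathbf{1}_M = \bzero$ from the $\bb$-condition, which gives $\lH\bH\mathbf{1}_M = \bzero$, hence $\bH\mathbf{1}_M = \bzero$ since $\lH>0$. Feeding $\bH\mathbf{1}_M = \bzero$ back into the $\bb$-condition collapses it to $M\bb = M\byb$, i.e. $\bb = \byb$. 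Because $\bb = \byb$ gives $\bb\mathbf{1}_M^{T} = \byb\mathbf{1}_M^{T} = \bYb$, I may replace $\mathbf{E}$ by $\bW\bH + \bYb - \bY$, and solving the first two stationarity conditions for $\bH$ and for $\bW$ respectively produces the two closed-form expressions in (i).

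For (ii), I would note that for fixed $(\bH,\bW)$ the map $\bb\mapsto\mathcal{L}(\bH,\bW,\bb)$ is a convex quadratic (a squared Frobenius norm in $\bb$ plus $\bb$-free terms), so any stationary point is a global minimizer over $\bb$; since part (i) shows the critical-point value equals $\byb$, the choice $\bb=\byb$ minimizes $\mathcal{L}(\bH,\bW,\cdot)$ for this $(\bH,\bW)$. For (iii), I would take the Frobenius inner product of $\lH\bH = -\bW^{T}\mathbf{E}$ with $\bH$ and of $M\lW\bW = -\mathbf{E}\bH^{T}$ with $\bW$; using $\langle\bW^{T}\mathbf{E},\bH\rangle = \operatorname{tr}(\mathbf{E}^{T}\bW\bH)$ and $\langle\mathbf{E}\bH^{T},\bW\rangle = \operatorname{tr}(\mathbf{E}^{T}\bW\bH)$ (cyclicity of the trace), both computations collapse to the common scalar $-\operatorname{tr}(\mathbf{E}^{T}\bW\bH)$, so $\lH\|\bH\|_F^2 = -\operatorname{tr}(\mathbf{E}^{T}\bW\bH) = M\lW\|\bW\|_F^2$. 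I expect the main obstacle to be precisely the $\bb=\byb$ argument, since it is the one place where two stationarity conditions must be combined rather than read off individually; the remaining parts are routine matrix calculus and trace manipulations.
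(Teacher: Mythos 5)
Your proposal is correct and follows essentially the same route as the paper: identical gradient computations, the same combination of the $\bb$- and $\bH$-stationarity conditions to force $\bar{\bh}=\bzero$ and hence $\bb=\byb$ (the paper substitutes $\bb=\byb-\bW\bar{\bh}$ into the averaged feature equation, whereas you right-multiply by $\mathbf{1}_M$ and use $\mathbf{E}\mathbf{1}_M=\bzero$ --- the same computation), and the same trace argument for (iii), which the paper packages as the matrix identity $\frac{\lH}{M}\bH\bH^{T}=\lW\bW^{T}\bW$ before taking traces.
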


\begin{proof}
i) 
To prove the first part of the lemma, we will proceed by equating to zero the gradients w.r.t. the variables of the optimization objective $\mathcal{L}$.
Those can be written in the form of a matrix in the following way:
\begin{align}
\frac{\partial \mathcal{L}}{\partial \bH}&=
\bW^{T} \frac{1}{M} (\bW \bH+\bb \mathbf{1}_M^{T}-\bY)+\frac{\lH}{M} \bH, \label{gradientfeat}
\\
\frac{\partial \mathcal{L}}{\partial \bW}&=
\frac{1}{M} (\bW \bH+\bb \mathbf{1}_M^{T}-\bY) \bH^{T}+\lW \bW, \label{gradientofw}
\\
\frac{\partial \mathcal{L}}{\partial \bb} &= \frac{1}{M}(\bW \bH + \bb \mathbf{1}_M^T -\bY) \mathbf{1}_M. 
\label{gradientofbias}
\end{align}

We set $\frac{\partial \mathcal{L}}{\partial \bb}=\mathbf{0}$ in \eqref{gradientofbias} and observe that 
\begin{equation} \label{formofbias}
\bb = \frac{1}{M} (\bY-\bW \bH) \mathbf{1}_M=
\frac{\bY \mathbf{1}_M}{M}-\bW \frac{\bH \mathbf{1}_M}{M}=\byb - \bW \bar{\bh}, 
\end{equation}
recalling that $\byb=M^{-1} \sum_{i=1}^M \by_i$ and $\bar{\bh}=M^{-1} \sum_{i=1}^M \bh_i$.

We set $\frac{\partial \mathcal{L}}{\partial \bW}=\mathbf{0}$ in \eqref{gradientofw} and observe that
\begin{equation} \label{gradientofw1}
    \lW \bW = -\frac{1}{M} (\bW\bH+\bb \mathbf{1}_M^T-\bY) \bH^{T}.
\end{equation}

We set $\frac{\partial \mathcal{L}}{\partial \bH}=\mathbf{0}$ in \eqref{gradientfeat} and observe that
\begin{align} \label{gradientfeat1}
\lH \bH &= - \bW^{T} (\bW \bH+\bb \mathbf{1}_M^{T}-\bY), 
\\
\lH \bh_i&=-\bW^{T} (\bW \bh_i+\bb-\by_i), \qquad \forall i=1,..,M, \label{gradientfeati}
\\
\lH \bar{\bh}&=-\bW^{T}(\bW \bar{\bh}+\bb-\byb). \label{gradientfeatavg}
\end{align}
We derived \eqref{gradientfeatavg} by summing both sides of \eqref{gradientfeati} over $i$, and subsequently dividing them by $M$. Substituting $\bb$ for $\byb-\bW \bar{\bh}$, see \eqref{formofbias}, we get
\begin{equation} \label{featmeanandb}
    \bar{\bh}=\mathbf{0}, \qquad \bb=\byb.
\end{equation}
Thus, combining \eqref{gradientofw1}, \eqref{gradientfeat1}, and \eqref{featmeanandb} completes the first part of the proof of i).

ii) If $(\bH, \bW, \bb)$ is a critical point of \eqref{formoflossnew}, noting that for fixed $(\bH, \bW)$, the objective 
$\mathcal{L}(\bH, \bW, \bb)$ is convex w.r.t. $\bb$, readily yields that $\bb=\byb$ minimizes $\mathcal{L}(\bH, \bW, \bb)$.

iii) Let us now verify that $\lH ||\bH||_F^2=M\lW ||\bW||_F^2$.
If $(\bH, \bW, \bb)$ is a critical point, then
\begin{equation} \label{normofwnh}
\frac{\partial \mathcal{L}}{\partial \bH} \bH^{T}-\bW^{T} \frac{\partial \mathcal{L}}{\partial \bW}=0.
\end{equation}
Recalling the first-order gradients of $\mathcal{L}$ w.r.t. $\bH$ and $\bW$ respectively, see \eqref{gradientfeat} and \eqref{gradientofw}, and substituting those in \eqref{normofwnh}, implies
\[
\left[\bW^{T} \frac{1}{M} (\bW \bH+\bYb-\bY)+\frac{\lH}{M} \bH \right] \bH^{T} = \bW^{T} \left[\frac{1}{M} (\bW \bH+\bYb-\bY) \bH^{T}+\lW \bW\right],
\]
which gives
\[
\frac{\lH}{M} \bH\bH^{T}=\lW \bW^{T}\bW.
\]
By definition,
\[
\frac{\lH}{M}||\bH||_F^2=\frac{\lH}{M} \text{tr}( \bH^{T} \bH)=\frac{\lH}{M} \text{tr}(\bH \bH^{T})=\lW \text{tr}(\bW^{T} \bW)=\lW ||\bW||_F^2,
\]
and this establishes the assertion $\lH ||\bH||_F^2 = M \lW ||\bW||_F^2$.

\end{proof}

Next, we touch upon various implications of Theorem \ref{gendim} in the case when $0<c<\lambda_{\text{min}}$, so that $[\bA^{1/2}]_{j*}=\bA^{1/2}$,   
where 
\begin{equation} \label{redefofa}
\bA=\bSigma^{1/2}-\sqrt{c} \bI_n.
\end{equation}  
For convenience, let us break the form of a global minimum $(\bH, \bW, \bb)$, see \eqref{optima1}, into three parts.
\begin{align}
\bW  &=   \left(\frac{\lH}{\lW}\right)^{1/4} \bA^{1/2}\bR \label{optW},
\\
\bH  &=  \left(\frac{\lW}{\lH}\right)^{1/4} \bR^{T}  \bA^{1/2} [\bSigma^{1/2}]^{-1} (\bY - \bYb), \label{optH}
\\
\bb &=  \byb, \label{optb}
\end{align}
where $\bR\in \mathbb{R}^{n\times d}$ is semi-orthogonal, i.e., $\bR \bR^T=\bI_n$. In the following lemma, we demonstrate that the residual error is a rescaled version of the centered targets, the value of the loss function at the global minimum can be computed directly by invoking the value of $c$ and the matrix norm of $\bA^{1/2}$.
\begin{lemma} \label{suppmat2}
Suppose $0<c<\lambda_{\min}$. For a global minimum $(\bH, \bW, \bb)$ of \eqref{formoflossnew}, we have that the residual error takes the following form:
\[
\bW \bH + \bYb-\bY=-\sqrt{c} [\bSigma^{1/2}]^{-1} (\bY-\bYb),
\]
Moreover, the value of the loss function $\mathcal{L}$ at the global minimum is calculated as
\[
\mathcal{L}(\bH, \bW, \bb)=\frac{n c}{2}+\sqrt{c} ||(\bSigma^{1/2}-\sqrt{c} \bI_n)^{\frac{1}{2}}||_F^2.
\]
\end{lemma}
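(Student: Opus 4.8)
The plan is to substitute the explicit forms of the global minimum from \eqref{optW}--\eqref{optb} directly into the residual and the objective \eqref{formoflossnew}, and then reduce everything to trace identities involving $\bSigma$. First I would compute the product $\bW\bH$. Multiplying \eqref{optW} and \eqref{optH}, the scalar prefactors $(\lH/\lW)^{1/4}$ and $(\lW/\lH)^{1/4}$ cancel, and the semi-orthogonality $\bR\bR^T = \bI_n$ collapses the middle factor, leaving $\bW\bH = \bA^{1/2}\bA^{1/2}[\bSigma^{1/2}]^{-1}(\bY - \bYb) = \bA[\bSigma^{1/2}]^{-1}(\bY - \bYb)$. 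Using $\bA = \bSigma^{1/2} - \sqrt{c}\bI_n$ gives $\bA[\bSigma^{1/2}]^{-1} = \bI_n - \sqrt{c}[\bSigma^{1/2}]^{-1}$, so $\bW\bH = (\bY - \bYb) - \sqrt{c}[\bSigma^{1/2}]^{-1}(\bY - \bYb)$. Since $\bb = \byb$ forces $\bb\mathbf{1}_M^T = \bYb$, the residual $\bW\bH + \bYb - \bY$ telescopes to $-\sqrt{c}[\bSigma^{1/2}]^{-1}(\bY - \bYb)$, establishing the first claim.

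For the loss value, I would evaluate the three terms of \eqref{formoflossnew} separately. For the data-fit term, squaring the residual just obtained gives $\|\bW\bH + \bYb - \bY\|_F^2 = c\,\text{tr}\!\left([\bSigma^{1/2}]^{-1}(\bY - \bYb)(\bY - \bYb)^T[\bSigma^{1/2}]^{-1}\right)$; substituting $(\bY - \bYb)(\bY - \bYb)^T = M\bSigma$ from the definition of $\bSigma$ and cancelling via $[\bSigma^{1/2}]^{-1}\bSigma[\bSigma^{1/2}]^{-1} = \bI_n$ yields $Mn$, so after the $\tfrac{1}{2M}$ factor this term contributes exactly $\tfrac{nc}{2}$.

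For the two regularization terms, the identity $\lH\|\bH\|_F^2 = M\lW\|\bW\|_F^2$ from Lemma \ref{suppmat1}(iii) lets me merge them into the single term $\lW\|\bW\|_F^2$. To evaluate it I would compute $\bW\bW^T = (\lH/\lW)^{1/2}\bA^{1/2}\bR\bR^T\bA^{1/2} = (\lH/\lW)^{1/2}\bA$, whence $\|\bW\|_F^2 = (\lH/\lW)^{1/2}\,\text{tr}(\bA)$ and $\lW\|\bW\|_F^2 = \sqrt{\lW\lH}\,\text{tr}(\bA) = \sqrt{c}\,\text{tr}(\bA)$. Finally, since $0 < c < \lambda_{\min}$ makes $\bA$ positive definite, its symmetric square root satisfies $\text{tr}(\bA) = \text{tr}(\bA^{1/2}\bA^{1/2}) = \|\bA^{1/2}\|_F^2$, so $\lW\|\bW\|_F^2 = \sqrt{c}\,\|(\bSigma^{1/2} - \sqrt{c}\bI_n)^{1/2}\|_F^2$; adding the two pieces gives the stated value.

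The computation is essentially routine matrix algebra, so there is no serious obstacle; the only points requiring care are (i) invoking the correct semi-orthogonality relation $\bR\bR^T = \bI_n$ rather than $\bR^T\bR$ (legitimate because $\bR$ is $n \times d$ with $n \le d$), and (ii) justifying that $\bA^{1/2}$ is symmetric so that $\|\bA^{1/2}\|_F^2 = \text{tr}(\bA)$, which relies precisely on the hypothesis $c < \lambda_{\min}$ guaranteeing $\bA \succ 0$.
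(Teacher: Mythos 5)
Your proposal is correct and follows essentially the same route as the paper's proof: substitute the closed forms \eqref{optW}--\eqref{optb}, use $\bR\bR^T=\bI_n$ and $\bA^{1/2}\bA^{1/2}=\bA$ to get the residual, evaluate the data-fit term via $M^{-1}(\bY-\bYb)(\bY-\bYb)^T=\bSigma$, and merge the two regularizers through Lemma \ref{suppmat1}(iii) into $\lW\|\bW\|_F^2=\sqrt{c}\,\mathrm{tr}(\bA)=\sqrt{c}\,\|\bA^{1/2}\|_F^2$. The only differences are cosmetic, and your explicit remarks on why $\bR\bR^T$ (not $\bR^T\bR$) is the right identity and why $c<\lambda_{\min}$ guarantees $\bA\succ 0$ are points the paper leaves implicit.
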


\begin{proof}
By \eqref{optW}-\eqref{optb}, for the first assertion,
\begin{align*}
 \bW \bH+\bYb-\bY&=\bA^{1/2} \bR \bR^T \bA^{1/2} [\bSigma^{1/2}]^{-1} (\bY-\bYb) -(\bY-\bYb)
 \\
 &=\left[ \bA [\bSigma^{1/2}]^{-1} -\bI_n\right] (\bY-\bYb)
 \\
 &=-\sqrt{c} [\bSigma^{1/2}]^{-1} (\bY-\bYb),
 \end{align*}
 noting, for the first equality, that $\bR \bR^T=\bI_n$, $\bA^{1/2} \bA^{1/2} = \bA$.
 For the third equality, see \eqref{redefofa}. Therefore,
\begin{align} \label{traceoferror}
\frac{1}{M} (\bW \bH+\bYb-\bY) (\bW \bH+\bYb-\bY)^T 
=
c [\bSigma^{1/2}]^{-1} \bSigma [\bSigma^{1/2}]^{-1}=c \bI_n.
\end{align}
Using Lemma \ref{suppmat1}(iii) and \eqref{traceoferror}, we deduce
\begin{align*}
\mathcal{L}(\bH, \bW, \bb)=\frac{n c}{2}+\lW ||\bW||_F^2&=\frac{n c}{2}+\lW \sqrt{\frac{\lH}{\lW}} \text{tr}(\bA)
=\frac{n c}{2}+\sqrt{c} ||\bA^{1/2}||_F^2,
\end{align*}
which completes the proof of the lemma.

\end{proof} 

The proof of Corollary \ref{conseq} directly follows:

\subsection{Proof of corollary \ref{conseq}}

(i) is derived in the proof of Lemma \ref{suppmat1}, see \eqref{gradientfeati} and \eqref{featmeanandb}. It is also easy to derive them from the form of $\bH$ as given in \eqref{optima1}. (ii) follows by the form $\bW$, see \eqref{optima1}. 
By Lemma \ref{suppmat1}(iii) and Lemma \ref{suppmat2}, (iii)-(v) follow immediately. 

\section{Proof of theorem \ref{gendim}}
\label{sec:d_exp}
The proof of Theorem \ref{gendim} leverages \cite[Lemma B.1] {zhou2022optimization}. For clarity, we now restate their lemma in our notation.
\begin{lemma} \cite[Lemma B.1] {zhou2022optimization} \label{zhoulemma}
    For $n, d, M$ with $d\ge n$, and $\tilde{\bY}:=\bY-\bYb\in \mathbb{R}^{n\times M}$ with SVD given by 
    $\tilde{\bY}=U \tilde{\bSigma} V^T=\sum_{i=1}^n \sigma_i u_i v_i^T$, 
    where $\sigma_1\ge \sigma_2\ge \cdots \ge \sigma_n\ge 0$ are the singular values, the following problem
    \[
    \min_{\bH\in \mathbb{R}^{d\times M}, \bW \in \mathbb{R}^{n\times d}} \mathcal{L}(\bH, \bW, \byb)
    \]
    is a strict saddle function with no spurious local minima, in the sense that 
    \\
    i) Any local minimum $(\bH, \bW, \byb)$ of \eqref{formoflossnew} is a global minimum of \eqref{formoflossnew}, with the following form
    \[
    \bW \bH = U [\tilde{\bSigma}-\sqrt{M \lW \lH}\bI_n]_{+} V^T.
    \]
    Correspondingly, the minimal objective value of \eqref{formoflossnew} is 
    \[
    \mathcal{L}(\bH, \bW, \byb)= \frac{1}{2} \sum_{i=1}^n (\eta_i-\sigma_i)^2 + \sqrt{M \lW \lH} \sum_{i=1}^n \eta_i,
    \]
    where $\eta_i:=\eta_i(\lH, \lW)$ is the $i$-th diagonal entry of $[\tilde{\bSigma}-\sqrt{M \lW \lH} \bI_n]_{+}$.
    \\
    ii) Any critical point $(\bH, \bW, \byb)$ that is not a local minimum is a strict saddle point with negative curvature, i.e., the Hessian at this critical point has at least one negative eigenvalue.
\end{lemma}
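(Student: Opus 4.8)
The plan is to bypass the nonconvex factored parametrization in $(\bW,\bH)$ by first collapsing it to a convex nuclear-norm problem (which pins down the global value and the product form $\bW\bH$), and then to analyze the first- and second-order stationarity conditions of the factored objective directly in order to certify the benign landscape. Since $\bb$ is fixed at $\byb$ here, the residual in \eqref{formoflossnew} is $\bW\bH-\tilde\bY$ with $\tilde\bY=\bY-\bYb$, so $\mathcal{L}=\frac{1}{2M}||\bW\bH-\tilde\bY||_F^2+\frac{\lH}{2M}||\bH||_F^2+\frac{\lW}{2}||\bW||_F^2$. First I would fix the product $\bZ:=\bW\bH$ and minimize the two regularizers over all factorizations: by AM--GM one has $\frac{\lH}{2M}||\bH||_F^2+\frac{\lW}{2}||\bW||_F^2\ge\sqrt{\lW\lH/M}\,||\bW||_F||\bH||_F$, with equality achievable via the rescaling $(\bW,\bH)\mapsto(\alpha\bW,\alpha^{-1}\bH)$, and then apply the classical nuclear-norm factorization identity $||\bZ||_*=\min_{\bZ=\bW\bH}||\bW||_F||\bH||_F$ (attainable because the inner dimension satisfies $d\ge n\ge\mathrm{rank}(\bZ)$). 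This yields $\min_{\bW,\bH}\mathcal{L}=\min_{\bZ}F(\bZ)$ for the convex surrogate $F(\bZ)=\frac{1}{2M}||\bZ-\tilde\bY||_F^2+\sqrt{\lW\lH/M}\,||\bZ||_*$.

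Next I would solve the convex problem. Multiplying by $M$ gives $M F(\bZ)=\frac12||\bZ-\tilde\bY||_F^2+\tau||\bZ||_*$ with $\tau=\sqrt{M\lW\lH}$, which is exactly singular-value soft-thresholding: its unique minimizer is $\bZ^\star=U[\tilde\bSigma-\tau\bI]_+V^T$, recovering the stated product form $\bW\bH=U[\tilde\bSigma-\sqrt{M\lW\lH}\bI]_+V^T$. Evaluating $F$ there, the residual and the thresholded matrix share singular vectors, so $||\bZ^\star-\tilde\bY||_F^2=\sum_i(\eta_i-\sigma_i)^2$ and $||\bZ^\star||_*=\sum_i\eta_i$ with $\eta_i=[\sigma_i-\tau]_+$, giving the stated optimal value. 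This settles the global-minimum content of (i).

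The reduction above gives the global value but does not by itself rule out spurious local minima in the $(\bW,\bH)$-landscape, so for (ii) I would analyze the stationarity conditions directly. Using Lemma \ref{suppmat1}(i) one gets $\lH\bH=-\bW^T(\bW\bH-\tilde\bY)$ and $M\lW\bW=-(\bW\bH-\tilde\bY)\bH^T$, and Lemma \ref{suppmat1}(iii) gives the balancedness $\frac{\lH}{M}\bH\bH^T=\lW\bW^T\bW$. Eliminating $\bH$ in favor of $\bZ$ yields $(\lH\bI+\bW\bW^T)\bZ=\bW\bW^T\tilde\bY$, and symmetrically $\bZ(M\lW\bI+\bH^T\bH)=\tilde\bY\bH^T\bH$; combined with balancedness these force, at any critical point, that $\bZ=\bW\bH$ shares left/right singular vectors with $\tilde\bY$ and that each activated singular value equals $[\sigma_i-\tau]_+$. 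Thus critical products are indexed by which dominant directions ($\sigma_i>\tau$) are switched on, the global minimizer being the one that activates all of them. For any critical point whose active set omits a direction $i_0$ with $\sigma_{i_0}>\tau$, I would inject an infinitesimal rank-one perturbation of $(\bW,\bH)$ aligned with $(u_{i_0},v_{i_0})$ (chosen to respect balancedness) and expand $\mathcal{L}$ to second order; the quadratic term is a negative multiple of $(\sigma_{i_0}-\tau)<0$, exhibiting a Hessian eigenvalue below zero. Hence every non-global critical point is a strict saddle, so every local minimum is global, which closes (i) and gives (ii).

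The main obstacle is the second-order analysis of Step three: establishing the rigidity that critical points must share singular vectors with $\tilde\bY$ and carry exactly thresholded singular values requires careful manipulation of the coupled bilinear equations through balancedness, and the explicit negative-curvature construction demands a correct sign computation of the Hessian quadratic form. Extra care is needed because the factored parametrization carries continuous symmetries (the flat directions $\bW\mapsto\bW\bR^T$, $\bH\mapsto\bR\bH$), so ``local minimum'' must be read modulo these, and one must verify that the proposed perturbation produces genuine negative curvature rather than merely a flat symmetry direction.
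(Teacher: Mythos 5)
First, a point of comparison you could not have known: the paper itself contains \emph{no} proof of this lemma. It is imported verbatim (modulo notation) from \citet[Lemma B.1]{zhou2022optimization} and used as a black box in the proof of Theorem~\ref{gendim}, so your attempt can only be judged on its own merits and against that external source; your Step~3 is of the same general type (critical-point classification plus an explicit negative-curvature direction) as the arguments used in that literature. Your Steps~1--2 are correct and complete: the AM--GM rescaling, the variational characterization $\|\bZ\|_*=\min_{\bZ=\bW\bH}\|\bW\|_F\|\bH\|_F$ (legitimate here since the inner dimension satisfies $d\ge n\ge \mathrm{rank}(\bZ)$), and singular-value soft-thresholding do pin down the optimal product $\bW\bH=U[\tilde{\bSigma}-\sqrt{M\lW\lH}\,\bI_n]_+V^T$. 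One bookkeeping correction: what you evaluate is $F(\bZ^\star)$, which is $\tfrac{1}{M}$ times the value displayed in the lemma, so you have not literally recovered ``the stated optimal value.'' The discrepancy is in fact an inconsistency in the paper's restatement: given the $\tfrac{1}{2M}$ normalization in \eqref{formoflossnew}, the correct minimum is your $F(\bZ^\star)$ --- it matches Corollary~\ref{conseq}(iv), $\mathcal{L}=nc/2+\sqrt{c}\,\|\bA^{1/2}\|_F^2$, after substituting $\sigma_i=\sqrt{M\lambda_i}$ --- whereas the displayed formula is larger by a factor of $M$. You should say you are correcting the constant, not matching it.

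The genuine gap is the one you flag yourself, and it is not minor: the rigidity claim that every critical point has product $\bZ=\sum_{i\in S}(\sigma_i-\tau)u_iv_i^T$ with $\tau:=\sqrt{M\lW\lH}$ and $S\subseteq\{i:\sigma_i>\tau\}$ is asserted (``combined with balancedness these force\dots'') but never derived, and it is the entire technical content of parts (i) and (ii) --- the convex reduction of Steps~1--2 certifies the global value but says nothing about non-global critical points of the factored landscape. The derivation is feasible from your own identities: eliminating $\bH$ from the stationarity equations gives $\mathbf{E}\mathbf{E}^T\bW=M\lW\lH\,\bW$ with $\mathbf{E}:=\bW\bH-\tilde{\bY}$, so the column space of $\bW$ lies in an eigenspace of $\mathbf{E}\mathbf{E}^T$ with eigenvalue $M\lW\lH$; one must then show, with care about repeated singular values (where singular vectors are not unique), that this forces alignment of $\bZ$ with a singular basis of $\tilde{\bY}$ and exact thresholding of the activated values. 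Until that is written out, the lemma is not proved. By contrast, your negative-curvature construction is correctly conceived, and the balancedness caveat you raise is essential rather than cosmetic: with $\bw_0=\alpha\, u_{i_0}a^T$, $\bh_0=\beta\, a v_{i_0}^T$, and $a$ a unit vector orthogonal to the common row/column space of $\bW$ and $\bH$ (which exists since the rank at such a critical point is at most $n-1<d$), the second-order change in $\mathcal{L}$ is $\epsilon^2\bigl(\tfrac{\lW\alpha^2}{2}+\tfrac{\lH\beta^2}{2M}-\tfrac{\alpha\beta\,\sigma_{i_0}}{M}\bigr)+O(\epsilon^4)$; the balanced choice $\lW\alpha^2=\lH\beta^2/M$, $\alpha\beta=1$ makes the bracket equal to $\tfrac{1}{M}(\tau-\sigma_{i_0})<0$, whereas the naive choice $\alpha=\beta=1$ gives $\tfrac{1}{M}\bigl(\tfrac{M\lW+\lH}{2}-\sigma_{i_0}\bigr)$, which can be nonnegative even when $\sigma_{i_0}>\tau$.
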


Let $\tilde{\bY}=\bY-\bYb=U \tilde{\bSigma} V^T=\sum_{i=1}^n \sigma_i u_i v_i^T$, denote the compact SVD of $\tilde{\bY}\in \mathbb{R}^{n\times M}$, where $\sigma_1\ge \sigma_2\ge \cdots \ge \sigma_n > 0$ are the singular values, and $\tilde{\bSigma}\in \mathbb{R}^{n\times n}$ is diagonal, containing the aforementioned singular values. Furthermore, $U\in \mathbb{R}^{n\times n}$, $V\in \mathbb{R}^{M\times n}$ are orthogonal and semi-orthogonal respectively, i.e., $U U^T=U^TU=\bI_n$ and $V^T V=\bI_n$ respectively. For the proof, recall the value of $c=\lW \lH$.

\begin{proof}[Proof of Theorem \ref{gendim}]

Let $(\bH, \bW, \byb)$ be a global minimum of \eqref{formofloss}. By Lemma \ref{zhoulemma}, $(\bH, \bW, \byb)$ has the following form:
\begin{equation} \label{prodwh1}
\bW\bH=U [\tilde{\bSigma}-\sqrt{M c}\bI_n]_{+} V^T.
\end{equation}

In light of Lemma \ref{suppmat1} and the identity $\lH ||\bH||_F^2=M \lW ||\bW||_F^2$, from \eqref{prodwh1}, we have that 
\begin{align} 
    \bW &= \left(\frac{\lH}{M \lW}\right)^{1/4} U [\tilde{\bSigma}-\sqrt{M c} \bI_n]_{+}^{\frac{1}{2}}\bR, \label{altW}
    \\
    \bH &= \left(\frac{M \lW}{\lH}\right)^{1/4} \bR^T [\tilde{\bSigma}-\sqrt{M c} \bI_n]_{+}^{\frac{1}{2}} V^T, \label{altH}
\end{align}
for all $\bR\in \mathbb{R}^{n\times d}$ such that $\bR \bR^T=\bI_n$. Furthermore, using the SVD of $\tilde{\bY}=U \tilde{\bSigma} V^T$,
\[
\bSigma=\frac{\tilde{\bY} \tilde{\bY}^T}{M} = U \frac{\tilde{\bSigma}}{\sqrt{M}} V^T V \frac{\tilde{\bSigma}}{\sqrt{M}} U^T= U \left[\frac{\tilde{\bSigma}}{\sqrt{M}}\right]^2 U^T,
\]
which deduces $\bSigma^{1/2} = U \frac{\tilde{\bSigma}}{\sqrt{M}} U^T$. Since $U^T=U^{-1}$, this further yields   
\begin{equation} \label{eigendec}
\sqrt{M} [\bSigma^{1/2}-\sqrt{c} \bI_n]=U \left[\tilde{\bSigma} -\sqrt{M c}\bI_n\right] U^{-1},
\end{equation}
which implies that the matrices $\sqrt{M} [\bSigma^{1/2}-\sqrt{c} \bI_n]$ and $\tilde{\bSigma} -\sqrt{M c}\bI_n$ are similar. As a result, they have the same eigenvalues.
The $n\times n$ matrix on the left-hand side of \eqref{eigendec} has eigenvalues given by $\sqrt{M \lambda_i}-\sqrt{M c}$, $i=1,..., n$, where $\lambda_i$ is the $i$-th eigenvalue (in descending order) of $\bSigma$ whereas $\sigma_i-\sqrt{M c}$, $i=1,..., n$ are the (ordered) diagonal elements of the $n\times n$ matrix on the right-hand side of \eqref{eigendec}. So, 
\begin{equation} \label{eigeneq}
\sqrt{\lambda_i} = \frac{\sigma_i}{\sqrt{M}}, \qquad \text{for all } i=1,..., n.
\end{equation}

\textbf{Case I:} If $0<c <\lambda_{\text{min}}$, then by \eqref{eigeneq}, it is the case that $\sigma_i>\sqrt{M c}$, $\forall i$, and thus $[\tilde{\bSigma}-\sqrt{M c}\bI_n]^{\frac{1}{2}}_{+}=[\tilde{\bSigma}-\sqrt{M c}\bI_n]^{\frac{1}{2}}$. By \eqref{eigendec}, 
\begin{equation} \label{eigendec1}
     U [\tilde{\bSigma}-\sqrt{M c}\bI_n]^{\frac{1}{2}}=M^{1/4}[\bSigma^{1/2}-\sqrt{c}\bI_n]^{\frac{1}{2}} U,
\end{equation}
and thus \eqref{altW} becomes
\[
\bW = \left(\frac{\lH}{\lW}\right)^{1/4} [\bSigma^{1/2}-\sqrt{c} \bI_n]^{\frac{1}{2}} \tilde{\bR},
\]
for $\tilde{\bR}:=U \bR\in \mathbb{R}^{n\times d}$ semi-orthogonal. The first assertion of the theorem follows by recalling the definition of $\bA=\left[\bSigma^{1/2} -\sqrt{c}\bI_n\right]$.

For the second assertion of the theorem, it remains to observe that 
\[
[\bSigma^{1/2}]^{-1} \tilde{\bY}=\sqrt{M} U \tilde{\bSigma}^{-1} U^T U \tilde{\bSigma} V^T=\sqrt{M} U V^T.
\]
So, $V^T=M^{-1/2} U^T [\bSigma^{1/2}]^{-1} \tilde{\bY}$, and from \eqref{altH}, we get
\begin{align*}
\bH &= \left(\frac{\lW}{\lH}\right)^{1/4} \bR^T M^{-1/4} [\tilde{\bSigma}-\sqrt{M c}\bI_n]^{\frac{1}{2}} U^T [\bSigma^{1/2}]^{-1} \tilde{\bY}
\\
&= \left(\frac{\lW}{\lH}\right)^{1/4} \tilde{\bR}^T [\bSigma^{1/2}-\sqrt{c}\bI_n]^{\frac{1}{2}} [\bSigma^{1/2}]^{-1} (\bY-\bYb)
\\
&= \sqrt{\frac{\lW}{\lH}} \bW^T [\bSigma^{1/2}]^{-1} (\bY-\bYb),
\end{align*}
where the second equality follows from \eqref{eigendec1}.
\\
\textbf{Case II:} If $c>\lambda_{\text{max}}$, then by \eqref{eigeneq}, it is the case that $\sigma_i< \sqrt{M c}$, $\forall i$, and thus $[\tilde{\bSigma}-\sqrt{M c}]^{\frac{1}{2}}_{+}=\mathbf{0}$. By \eqref{altW} and \eqref{altH}, $(\bH, \bW, \byb)=(\mathbf{0}, \mathbf{0}, \byb)$ as desired.
\\
\textbf{Case III:} If $\lambda_{\text{min}}<c<\lambda_{\text{max}}$, by \eqref{eigeneq}, it is the case that
\[
[\sigma_i-\sqrt{M c}]_{+}=
\begin{cases}
    \sigma_i - \sqrt{M c}, &\text{if } i\le j*
    \\
    0, &\text{ otherwise}, 
\end{cases}
\]
where $j*=\max\{j:\lambda_j\ge c\}$, and thus
$
[\tilde{\bSigma}-\sqrt{M c}]_{+}^{\frac{1}{2}}=[\tilde{\bSigma}-\sqrt{M c}]_{j*}^{\frac{1}{2}}
$
.
By \eqref{eigendec}, 
\begin{equation} \label{eigendec2}
U \left[\tilde{\bSigma}-\sqrt{M c}\bI_n\right]^{1/2}_{j*}
=
M^{1/4} \left[\left[\bSigma^{1/2}-\sqrt{c}\bI_n\right]^{\frac{1}{2}} 
\right]_{j*}U ,
\end{equation} 
and thus \eqref{altW} becomes
\[
\bW = \left(\frac{\lH}{\lW}\right)^{1/4} \left[\left[\bSigma^{1/2}-\sqrt{c} \bI_n\right]^{\frac{1}{2}} \right]_{j*} \tilde{\bR},
\]
for $\tilde{\bR}:=U \bR\in \mathbb{R}^{n\times d}$ semi-orthogonal. The first assertion of the theorem follows by recalling the definition of $\bA=\left[\bSigma^{1/2} -\sqrt{c}\bI_n\right]$.

For the second assertion of the theorem, it remains to observe that 
\[
[\bSigma^{1/2}]^{-1} \tilde{\bY}=\sqrt{M} U \tilde{\bSigma}^{-1} U^T U \tilde{\bSigma} V^T=\sqrt{M} U V^T.
\]
So, $V^T=M^{-1/2} U^T [\bSigma^{1/2}]^{-1} \tilde{\bY}$, and from \eqref{altH}, we get
\begin{align*}
\bH &= \left(\frac{\lW}{\lH}\right)^{1/4} \bR^T M^{-1/4} \left[\tilde{\bSigma}-\sqrt{M c}\bI_n\right]_{j*}^{\frac{1}{2}} U^T [\bSigma^{1/2}]^{-1} \tilde{\bY}
\\
&= \left(\frac{\lW}{\lH}\right)^{1/4} \tilde{\bR}^T \left[\left[\bSigma^{1/2}-\sqrt{c}\bI_n\right]^{\frac{1}{2}}\right]_{j*} [\bSigma^{1/2}]^{-1} (\bY-\bYb)
\\
&= \sqrt{\frac{\lW}{\lH}} \bW^T [\bSigma^{1/2}]^{-1} (\bY-\bYb),
\end{align*}
where the second equality follows from \eqref{eigendec2}.

\end{proof}

\subsection{Examples for theorem \ref{gendim} (uncorrelated target components)} \label{specialcases}

In this subsection, we examine closely the case when $n=3$ and the target components are uncorrelated. This simplifies considerably the problem as now the covariance matrix $\bSigma$ is a diagonal matrix with entries given (in order) by $\sigma_j^2$, where $\sigma_j^2$ denotes the variance of the $j$-th target component, for $j=1,2,3$. The unique positive definite and symmetric matrix $\bA^{1/2}$, see \eqref{redefofa}, is given by 
\begin{equation} \label{rootofA}
\bA^{1/2}=
\begin{bmatrix}
(\sigma_1-\sqrt{c})^{\frac{1}{2}} & 0 & 0
\\
0 & (\sigma_2 -\sqrt{c})^{\frac{1}{2}} & 0
\\
0 & 0 & (\sigma_3 - \sqrt{c})^{\frac{1}{2}}
\end{bmatrix}.
\end{equation}
Without loss of generality, assume that $\sigma_{\text{max}}:=\sigma_1\ge \sigma_2\ge \sigma_3:=\sigma_{\text{min}}>0$. 

\begin{itemize}
    \item If $0<c < \sigma_{\text{min}}^2=\sigma_3^2$, by Theorem \ref{gendim}, $j*=3$, and therefore any global minimum $(\bH, \bW, \bb)$ of \eqref{formoflossnew} takes the following form:
    \[
    \bW = \left(\frac{\lH}{\lW}\right)^{1/4} \bA^{1/2} \bR, \qquad \bH = \sqrt{\frac{\lW}{\lH}}\bW^T [\bSigma^{1/2}]^{-1} (\bY - \bYb), \qquad \bb=\byb,
    \]
    for any semi-orthogonal matrix $\bR\in \mathbb{R}^{3\times d}$. The form of $\bA^{1/2}$, see \eqref{rootofA}, readily yields
    \[
    \bw_j^T=\sqrt{\lH \left(\frac{\sigma_j}{c^{1/2}}-1\right)} \be_j, \qquad j=1,2,3,
     \]
     c.f., \eqref{1doptland}, where $\{\be_j: j=1,2,3\}$ is any collection of vectors lying in $\mathbb{R}^d$ such that $\be_j$ is a unit vector, for all $j=1,2,3$, and $\be_k$ is orthogonal to $\be_{k'}$, for all $k\neq k'$.

     To interpret the landscape of global minima in the case when the target components are uncorrelated, the UFM ``forces'' the angle between the weight matrix rows to be $\pi/2$ (fixes the weight matrix rows to be orthogonal). Then, the configuration of the $bw_j$'s is exactly as in the 1-dimensional target case, that is those are restricted to lie on spheres of certain radiuses. The feature vector $\bh_i$ that corresponds to the $i$-th training example is then on the 3-dimensional subspace spanned by $\bw_1$, $\bw_2$ and $\bw_3$. 
     \item If $\sigma_{\text{min}}^2<c<\sigma_{\text{max}}^2$, by Theorem \ref{gendim}, $j*=1$ or $j*=2$. We analyze the latter, in which case $c<\sigma_1^2$, $c<\sigma_2^2$ but $c>\sigma_3^2$. By Theorem \ref{gendim}, any global minimum $(\bH, \bW, \bb)$ of \eqref{formoflossnew} takes the form below:
     \[
     \bW = \left(\frac{\lH}{\lW}\right)^{1/4} [\bA^{1/2}]_{2*} \bR, \qquad \bH = \sqrt{\frac{\lW}{\lH}}\bW^T [\bSigma^{1/2}]^{-1} (\bY - \bYb), \qquad \bb=\byb, 
     \]
     for any semi-orthogonal matrix $\bR\in \mathbb{R}^{3\times d}$. The form of $[\bA^{1/2}]_{2*}$, see \eqref{rootofA}, readily yields
    \begin{align*}
    \bw_j^T&=\sqrt{\lH \left(\frac{\sigma_j}{c^{1/2}}-1\right)} \be_j, \qquad j=1,2,
    \\
    \bw_3^T&=\mathbf{0},
    \end{align*}
    c.f., \eqref{1doptland}, where $\be_1, \be_2\in \mathbb{R}^d$ unit vectors orthogonal to each other. It is also worth mentioning that 
    \begin{align*}
    \bh_i &=\sqrt{\frac{\lW}{\lH}} \left[\bw_1^T \ \bw_2^T \ \mathbf{0}\right] [\bSigma^{1/2}]^{-1} (\by_i-\byb)
    \\
    &=\sqrt{\frac{\lW}{\lH}} 
    \left[\frac{(\by_i^{(1)}-\byb^{(1)})}{\sigma_1} \bw_1^T+\frac{(\by_i^{(2)}-\byb^{(2)})}{\sigma_2} \bw_2^T\right],
    \end{align*}
    for all $i=1,...,M$. In other words, for fixed $i$, the feature vector $\bh_i$ lies on $\text{span}\{\bw_1^T, \bw_2^T\}$. Moreover, the previous formula indicates that the inner product between $\bh_i$ and $\bw_j^T$ is proportional to the $j$-th standardized target component. 
    
    The analysis of the case $j*=1$, i.e., $c<\sigma_1^2$ but $c>\sigma_2^2$, $c>\sigma_3^2$ is analogous, therefore we just record the form of the $\bw_j$'s and $\bh_i$'s, omitting any further details:
    \begin{align*}
    \bw_1^T&=\sqrt{\lH \left(\frac{\sigma_1}{c^{1/2}}-1\right)} \be, \qquad \bw_2^T=\bw_3^T=\mathbf{0},
    \\
    \bh_i& = \sqrt{\frac{\lW}{\lH}} 
    \frac{(\by_i^{(1)}-\byb^{(1)})}{\sigma_1} \bw_1^T,
    \end{align*}
    for all $i=1,...,M$. In other words, for fixed $i$, the feature vector $\bh_i$ is colinear with $\bw_1^T$.
    \item If $c>\sigma_{\text{max}}^2=\sigma_1^2$, by Theorem \ref{gendim}, $(\bH, \bW, \bb)=(\mathbf{0}, \mathbf{0}, \byb)$.
\end{itemize}

\section{Proof of theorem \ref{no_regularization} (no regularization)}
\label{sec:e_exp}
We first show
\begin{equation*}
    \min_{\bW,\bH} L(\bW,\bH) = 0
\end{equation*}
Clearly $L(\bW,\bH) \geq 0$ for all $\bW$ and $\bH$. Now consider any fixed $n \times d$ matrix $\bW$ with $\text{rank}(\bW) = n$. Since $\bW$ has rank $n$, $\{ \bW \bh : \bh \in \Rd \} = \mathbb{R}^n$.
Thus there exists $\bh_i \in \Rd$ such that $\bW \bh_i = \by_i$ for all $i=1,\ldots,M$. 
Let $\bH = [\bh_1 \cdots \bh_M]$. 
For this $\bW$ and $\bH$ we have $L(\bW,\bH) = 0$. Thus $\min_{\bW,\bH} L(\bW,\bH) = 0$. 

Now let $\bW$ be any $n \times d$ matrix with full rank $n$, and consider the set of $\bH$ that satisfy $L(\bW,\bH)=0$ w.r.t. this $\bW$. This is a standard least squares problem for which the solution is well known: 
\begin{equation}
    \bH = \bW^+ \bY + (\bI_d - \bW^{+} \bW) \bZ
\end{equation}
where $\bW^+$ is the pseudo-inverse of $\bW$ and $\bZ$ is any $d \times M$ matrix.

To complete the proof, we need to show that if $\mbox{rank}(\bW) < n$, then $\bW$ cannot be part of an optimal solution for $L(\bW,\bH)$.
Suppose $\mbox{rank}(\bW) < n$. $\bW \bh = \by$ only has a solution if $\by$ lies in the column space of $\bW$. Thus $L(\bH,\bW) =0$ only if $\by_1,\ldots,\by_M$ all lie in the column space of $\bW$. Since $\mbox{rank}(\bY) = n$ and $\mbox{rank}(\bW) < n$, there will be at least one $\by_i$ that is not in column space of $\bW$. Thus for this $\by_i$ there is no $\bh_i$ such that $\bW \bh_i = \by_i$. Thus for this $\by_i$ we will have 
$(\by_i - \bW \bh_i)^2  > 0$, implying $\bW$ cannot be part of an optimal solution $\bW,\bH$.

Finally, we note from  (\ref{noreg}) that a column $\bh$ in $\bH$ is the sum of two vectors, with the first vector lying in the row space of $\bW$ and the second vector lying in the (orthogonal) null space of $\bW$. Since $\bW$ has full rank, this implies that the columns of $\bH$ can span all of $\Rd$.

\section{Proof of the uniqueness of $\gamma$}
\label{appendix:unique_gamma}
Mathematically, we can show that, under a condition that is satisfied if $\lambda_{WD}$ is reasonably large, 
\begin{equation} \label{nrc3gamma}
\text{NRC3}(\gamma):=\left|\left| \bW \bW^T-\bSigma^{1/2} + \gamma^{1/2} \bI_n\right|\right|_F^2,
\end{equation}
as given in the definition of NRC3, without normalizing the Gram matrix of $\bW$ and $\bSigma^{1/2} - \gamma^{1/2} \bI_n$, is convex and it has a unique minimum.

\begin{theorem}
\label{thm:unique_gamma}
If 
\[
\textnormal{tr}(\bSigma^{1/2})>\textnormal{tr}(\bW \bW^T),
\]
$\textnormal{NRC3}(\gamma)$, as given in \eqref{nrc3gamma}, is minimized at
\[
\gamma^*:=\left[\frac{\textnormal{tr}(\bSigma^{1/2})-\textnormal{tr}(\bW \bW^T)}{n}\right]^2.
\]
Moreover, 
\[
\textnormal{NRC3}(\gamma^*)=\left|\left|\left(\bW \bW^T-\frac{\textnormal{tr}(\bW \bW^T)}{n} \bI_n\right) - \left(\bSigma^{1/2}-\frac{\textnormal{tr}(\bSigma^{1/2})}{n} \bI_n\right)\right|\right|_F^2.
\]
\end{theorem}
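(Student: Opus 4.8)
The plan is to collapse the matrix optimization to a one-variable convex problem by expanding the Frobenius norm as a trace. First I would introduce the symmetric $n\times n$ matrix $\mathbf{M} := \bW\bW^T - \bSigma^{1/2}$ (symmetric because $\bW\bW^T$ is symmetric and $\bSigma^{1/2}$ is the symmetric positive-definite square root). Using $\|\bX\|_F^2 = \textnormal{tr}(\bX\bX^T)$ together with the symmetry of $\mathbf{M}$, the cross terms combine into a single trace, giving
\[
\textnormal{NRC3}(\gamma) = \|\mathbf{M} + \gamma^{1/2}\bI_n\|_F^2 = \textnormal{tr}(\mathbf{M}^2) + 2\gamma^{1/2}\,\textnormal{tr}(\mathbf{M}) + n\gamma .
\]
The key structural observation is that $\bW$ and $\bSigma$ enter only through the scalar $\textnormal{tr}(\mathbf{M}) = \textnormal{tr}(\bW\bW^T) - \textnormal{tr}(\bSigma^{1/2})$, so the problem is genuinely scalar from here on.

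Next I would treat this as a function of $\gamma \ge 0$ and differentiate. The first and second derivatives are $\frac{d}{d\gamma}\textnormal{NRC3}(\gamma) = \gamma^{-1/2}\textnormal{tr}(\mathbf{M}) + n$ and $-\tfrac12\gamma^{-3/2}\textnormal{tr}(\mathbf{M})$. This is exactly where the hypothesis is invoked: $\textnormal{tr}(\bSigma^{1/2}) > \textnormal{tr}(\bW\bW^T)$ is precisely the statement $\textnormal{tr}(\mathbf{M}) < 0$, which makes the second derivative strictly positive on $(0,\infty)$ (equivalently, $2\textnormal{tr}(\mathbf{M})\gamma^{1/2}$ is a negative multiple of the concave map $\sqrt{\gamma}$, hence strictly convex), establishing the convexity claimed in the surrounding statement. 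Setting the first derivative to zero yields $\gamma^{1/2} = -\textnormal{tr}(\mathbf{M})/n = (\textnormal{tr}(\bSigma^{1/2}) - \textnormal{tr}(\bW\bW^T))/n$, and because $\textnormal{tr}(\mathbf{M})<0$ this root is strictly positive, so squaring delivers the claimed $\gamma^*$ as a bona fide interior minimizer.

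Finally I would recover the optimal value by substituting $t^* := (\gamma^*)^{1/2} = -\textnormal{tr}(\mathbf{M})/n$ back into $\|\mathbf{M} + t\bI_n\|_F^2$, obtaining $\|\mathbf{M} - \tfrac{\textnormal{tr}(\mathbf{M})}{n}\bI_n\|_F^2$, i.e.\ the squared Frobenius norm of the trace-centered part of $\mathbf{M}$. Expanding $\mathbf{M} = \bW\bW^T - \bSigma^{1/2}$ and splitting the correction as $\frac{\textnormal{tr}(\mathbf{M})}{n}\bI_n = \frac{\textnormal{tr}(\bW\bW^T)}{n}\bI_n - \frac{\textnormal{tr}(\bSigma^{1/2})}{n}\bI_n$, then regrouping the two centered matrices, reproduces the stated formula for $\textnormal{NRC3}(\gamma^*)$.

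These computations are essentially routine; the single genuine subtlety—and the only place the hypothesis is needed—is the sign of $\textnormal{tr}(\mathbf{M})$. That one inequality does double duty: it guarantees strict convexity of the $\sqrt{\gamma}$-term and simultaneously forces the stationary point into the feasible region $\gamma > 0$, rather than collapsing to a degenerate boundary minimizer at $\gamma = 0$. I expect verifying this sign condition (and hence the interpretation of when a meaningful optimal $\gamma$ exists) to be the conceptual crux, with the remaining algebra being direct.
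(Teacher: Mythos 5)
Your proposal is correct and follows essentially the same route as the paper's proof: expand the squared Frobenius norm so that $\gamma$ enters only through $2\gamma^{1/2}\,[\textnormal{tr}(\bW\bW^T)-\textnormal{tr}(\bSigma^{1/2})]+n\gamma$, apply the second-derivative test using the sign hypothesis, and solve the first-order condition for $\gamma^{1/2}$. Your back-substitution to verify the stated value of $\textnormal{NRC3}(\gamma^*)$ is a small addition the paper leaves implicit, but the argument is the same.
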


\begin{proof}

Expanding the squared matrix norm appearing in the definition of $\text{NRC3}(\gamma)$, it is necessary and sufficient to minimize
\[
f(\gamma):= 2 \gamma^{1/2} \left[\text{tr}(\bW \bW^T)-\text{tr}(\bSigma^{1/2})\right] + n \gamma,
\]
which has first and second derivatives given by
\begin{align*}
f'(\gamma)&=\frac{\text{tr}(\bW \bW^T)-\text{tr}(\bSigma^{1/2})}{\gamma^{1/2}}+n,
\\
f''(\gamma)&=\frac{\text{tr}(\bSigma^{1/2})-\text{tr}(\bW \bW^T)}{2 \gamma^{3/2}}.
\end{align*}
Since 
$\text{tr}(\bSigma^{1/2})>\textnormal{tr}(\bW\bW^T)$, by the 2nd derivative test, $f$, and consequently $\text{NRC3}(\gamma)$, is convex with unique minimum achieved at 
\[
\gamma^*:=\left[\frac{\text{tr}(\bSigma^{1/2})-\text{tr}(\bW \bW^T)}{n}\right]^2.
\]

\end{proof}

\newpage
\section*{NeurIPS Paper Checklist}

\begin{enumerate}

\item {\bf Claims}
    \item[] Question: Do the main claims made in the abstract and introduction accurately reflect the paper's contributions and scope?
    \item[] Answer: \answerYes{} % Replace by \answerYes{}, \answerNo{}, or \answerNA{}.
    \item[] Justification: In the abstract, we claim to empirically demonstrate the phenomenon of Neural Regression Collapse as discussed in Section \ref{sec:exp_case1}, and then we provide a theoretical explanation from the perspective of the Unconstrained Feature Model, see Section \ref{sec: theoretical} and Appendix \ref{sec:b_exp},\ref{sec:c_exp},\ref{sec:d_exp}. Thus, we show that the phenomena of neural collapse could be a universal behavior in deep learning both empirically and theoretically.

    \item[] Guidelines:
    \begin{itemize}
        \item The answer NA means that the abstract and introduction do not include the claims made in the paper.
        \item The abstract and/or introduction should clearly state the claims made, including the contributions made in the paper and important assumptions and limitations. A No or NA answer to this question will not be perceived well by the reviewers. 
        \item The claims made should match theoretical and experimental results, and reflect how much the results can be expected to generalize to other settings. 
        \item It is fine to include aspirational goals as motivation as long as it is clear that these goals are not attained by the paper. 
    \end{itemize}

\item {\bf Limitations}
    \item[] Question: Does the paper discuss the limitations of the work performed by the authors?
    \item[] Answer: \answerYes{} % Replace by \answerYes{}, \answerNo{}, or \answerNA{}.
    \item[] Justification: In Section \ref{sec5}, we point out that our explanation of neural regression collapse doesn't have implications for model generalization.
    \item[] Guidelines:
    \begin{itemize}
        \item The answer NA means that the paper has no limitation while the answer No means that the paper has limitations, but those are not discussed in the paper. 
        \item The authors are encouraged to create a separate "Limitations" section in their paper.
        \item The paper should point out any strong assumptions and how robust the results are to violations of these assumptions (e.g.,, independence assumptions, noiseless settings, model well-specification, asymptotic approximations only holding locally). The authors should reflect on how these assumptions might be violated in practice and what the implications would be.
        \item The authors should reflect on the scope of the claims made, e.g., if the approach was only tested on a few datasets or with a few runs. In general, empirical results often depend on implicit assumptions, which should be articulated.
        \item The authors should reflect on the factors that influence the performance of the approach. For example, a facial recognition algorithm may perform poorly when the image resolution is low or images are taken in low lighting. Or a speech-to-text system might not be used reliably to provide closed captions for online lectures because it fails to handle technical jargon.
        \item The authors should discuss the computational efficiency of the proposed algorithms and how they scale with dataset size.
        \item If applicable, the authors should discuss possible limitations of their approach to address problems of privacy and fairness.
        \item While the authors might fear that complete honesty about limitations might be used by reviewers as grounds for rejection, a worse outcome might be that reviewers discover limitations that aren't acknowledged in the paper. The authors should use their best judgment and recognize that individual actions in favor of transparency play an important role in developing norms that preserve the integrity of the community. Reviewers will be specifically instructed to not penalize honesty concerning limitations.
    \end{itemize}

\item {\bf Theory Assumptions and Proofs}
    \item[] Question: For each theoretical result, does the paper provide the full set of assumptions and a complete (and correct) proof?
    \item[] Answer: \answerYes{} % Replace by \answerYes{}, \answerNo{}, or \answerNA{}.
    \item[] Justification: In Section \ref{sec: theoretical} and Appendix \ref{sec:b_exp},\ref{sec:c_exp},\ref{sec:d_exp}, \ref{sec:e_exp}, we provide full set of assumptions and proofs.
    \item[] Guidelines:
    \begin{itemize}
        \item The answer NA means that the paper does not include theoretical results. 
        \item All the theorems, formulas, and proofs in the paper should be numbered and cross-referenced.
        \item All assumptions should be clearly stated or referenced in the statement of any theorems.
        \item The proofs can either appear in the main paper or the supplemental material, but if they appear in the supplemental material, the authors are encouraged to provide a short proof sketch to provide intuition. 
        \item Inversely, any informal proof provided in the core of the paper should be complemented by formal proofs provided in the appendix or supplemental material.
        \item Theorems and Lemmas that the proof relies upon should be properly referenced. 
    \end{itemize}

    \item {\bf Experimental Result Reproducibility}
    \item[] Question: Does the paper fully disclose all the information needed to reproduce the main experimental results of the paper to the extent that it affects the main claims and/or conclusions of the paper (regardless of whether the code and data are provided or not)?
    \item[] Answer: \answerYes{} % Replace by \answerYes{}, \answerNo{}, or \answerNA{}.
    \item[] Justification: In Sections \ref{sec:exp_case1} and Appendix \ref{sec:a_exp}, we provide a comprehensive description of the experimental setup, including detailed information on the datasets used, model architectures, and hyperparameter settings. This detailed disclosure ensures that other researchers can reliably reproduce the experimental results and validate the claims made in the paper.
    \item[] Guidelines:
    \begin{itemize}
        \item The answer NA means that the paper does not include experiments.
        \item If the paper includes experiments, a No answer to this question will not be perceived well by the reviewers: Making the paper reproducible is important, regardless of whether the code and data are provided or not.
        \item If the contribution is a dataset and/or model, the authors should describe the steps taken to make their results reproducible or verifiable. 
        \item Depending on the contribution, reproducibility can be accomplished in various ways. For example, if the contribution is a novel architecture, describing the architecture fully might suffice, or if the contribution is a specific model and empirical evaluation, it may be necessary to either make it possible for others to replicate the model with the same dataset, or provide access to the model. In general. releasing code and data is often one good way to accomplish this, but reproducibility can also be provided via detailed instructions for how to replicate the results, access to a hosted model (e.g.,, in the case of a large language model), releasing of a model checkpoint, or other means that are appropriate to the research performed.
        \item While NeurIPS does not require releasing code, the conference does require all submissions to provide some reasonable avenue for reproducibility, which may depend on the nature of the contribution. For example
        \begin{enumerate}
            \item If the contribution is primarily a new algorithm, the paper should make it clear how to reproduce that algorithm.
            \item If the contribution is primarily a new model architecture, the paper should describe the architecture clearly and fully.
            \item If the contribution is a new model (e.g.,, a large language model), then there should either be a way to access this model for reproducing the results or a way to reproduce the model (e.g.,, with an open-source dataset or instructions for how to construct the dataset).
            \item We recognize that reproducibility may be tricky in some cases, in which case authors are welcome to describe the particular way they provide for reproducibility. In the case of closed-source models, it may be that access to the model is limited in some way (e.g.,, to registered users), but it should be possible for other researchers to have some path to reproducing or verifying the results.
        \end{enumerate}
    \end{itemize}

\item {\bf Open access to data and code}
    \item[] Question: Does the paper provide open access to the data and code, with sufficient instructions to faithfully reproduce the main experimental results, as described in supplemental material?
    \item[] Answer: \answerYes{} % Replace by \answerYes{}, \answerNo{}, or \answerNA{}.
    \item[] Justification: We upload the code with environment in the supplemental materials. The datasets used are all open-source datasets.
    \item[] Guidelines:
    \begin{itemize}
        \item The answer NA means that paper does not include experiments requiring code.
        \item Please see the NeurIPS code and data submission guidelines (\url{https://nips.cc/public/guides/CodeSubmissionPolicy}) for more details.
        \item While we encourage the release of code and data, we understand that this might not be possible, so “No” is an acceptable answer. Papers cannot be rejected simply for not including code, unless this is central to the contribution (e.g.,, for a new open-source benchmark).
        \item The instructions should contain the exact command and environment needed to run to reproduce the results. See the NeurIPS code and data submission guidelines (\url{https://nips.cc/public/guides/CodeSubmissionPolicy}) for more details.
        \item The authors should provide instructions on data access and preparation, including how to access the raw data, preprocessed data, intermediate data, and generated data, etc.
        \item The authors should provide scripts to reproduce all experimental results for the new proposed method and baselines. If only a subset of experiments are reproducible, they should state which ones are omitted from the script and why.
        \item At submission time, to preserve anonymity, the authors should release anonymized versions (if applicable).
        \item Providing as much information as possible in supplemental material (appended to the paper) is recommended, but including URLs to data and code is permitted.
    \end{itemize}

\item {\bf Experimental Setting/Details}
    \item[] Question: Does the paper specify all the training and test details (e.g.,, data splits, hyperparameters, how they were chosen, type of optimizer, etc.) necessary to understand the results?
    \item[] Answer: \answerYes{} % Replace by \answerYes{}, \answerNo{}, or \answerNA{}.
    \item[] Justification: In Sections \ref{sec:exp_case1} and Appendix \ref{sec:a_exp}, we provide a comprehensive description of the experimental setup, including detailed information on the datasets used, model architectures, and hyperparameter settings. More details can be found in the code in the supplemental materials.
    \item[] Guidelines:
    \begin{itemize}
        \item The answer NA means that the paper does not include experiments.
        \item The experimental setting should be presented in the core of the paper to a level of detail that is necessary to appreciate the results and make sense of them.
        \item The full details can be provided either with the code, in appendix, or as supplemental material.
    \end{itemize}

\item {\bf Experiment Statistical Significance}
    \item[] Question: Does the paper report error bars suitably and correctly defined or other appropriate information about the statistical significance of the experiments?
    \item[] Answer: \answerNo{} % Replace by \answerYes{}, \answerNo{}, or \answerNA{}.
    \item[] Justification: All the results are reported in terms of learning curves, and each figure includes many plots, so error bars are not reported. But we do run the experiments multiple times and observe very similar performance in terms of NRC, testing loss, etc.
    \item[] Guidelines:
    \begin{itemize}
        \item The answer NA means that the paper does not include experiments.
        \item The authors should answer "Yes" if the results are accompanied by error bars, confidence intervals, or statistical significance tests, at least for the experiments that support the main claims of the paper.
        \item The factors of variability that the error bars are capturing should be clearly stated (for example, train/test split, initialization, random drawing of some parameter, or overall run with given experimental conditions).
        \item The method for calculating the error bars should be explained (closed form formula, call to a library function, bootstrap, etc.)
        \item The assumptions made should be given (e.g.,, Normally distributed errors).
        \item It should be clear whether the error bar is the standard deviation or the standard error of the mean.
        \item It is OK to report 1-sigma error bars, but one should state it. The authors should preferably report a 2-sigma error bar than state that they have a 96\% CI, if the hypothesis of Normality of errors is not verified.
        \item For asymmetric distributions, the authors should be careful not to show in tables or figures symmetric error bars that would yield results that are out of range (e.g., negative error rates).
        \item If error bars are reported in tables or plots, The authors should explain in the text how they were calculated and reference the corresponding figures or tables in the text.
    \end{itemize}

\item {\bf Experiments Compute Resources}
    \item[] Question: For each experiment, does the paper provide sufficient information on the computer resources (type of compute workers, memory, time of execution) needed to reproduce the experiments?
    \item[] Answer: \answerYes{} % Replace by \answerYes{}, \answerNo{}, or \answerNA{}.
    \item[] Justification: In Appendix \ref{sec:a_exp}, we provide the details for computation resources.
    \item[] Guidelines:
    \begin{itemize}
        \item The answer NA means that the paper does not include experiments.
        \item The paper should indicate the type of compute workers CPU or GPU, internal cluster, or cloud provider, including relevant memory and storage.
        \item The paper should provide the amount of compute required for each of the individual experimental runs as well as estimate the total compute. 
        \item The paper should disclose whether the full research project required more compute than the experiments reported in the paper (e.g.,, preliminary or failed experiments that didn't make it into the paper). 
    \end{itemize}
    
\item {\bf Code Of Ethics}
    \item[] Question: Does the research conducted in the paper conform, in every respect, with the NeurIPS Code of Ethics \url{https://neurips.cc/public/EthicsGuidelines}?
    \item[] Answer: \answerYes{} % Replace by \answerYes{}, \answerNo{}, or \answerNA{}.
    \item[] Justification: We are sure that the research presented in this paper adheres to the NeurIPS Code of Ethics in all respects.
    \item[] Guidelines:
    \begin{itemize}
        \item The answer NA means that the authors have not reviewed the NeurIPS Code of Ethics.
        \item If the authors answer No, they should explain the special circumstances that require a deviation from the Code of Ethics.
        \item The authors should make sure to preserve anonymity (e.g.,, if there is a special consideration due to laws or regulations in their jurisdiction).
    \end{itemize}

\item {\bf Broader Impacts}
    \item[] Question: Does the paper discuss both potential positive societal impacts and negative societal impacts of the work performed?
    \item[] Answer: \answerNA{} % Replace by \answerYes{}, \answerNo{}, or \answerNA{}.
    \item[] Justification: This paper mainly focuses on showing and understanding the neural regression collapse phenomena observed in practical neural networks and unconstrained feature models. No potential negative societal impact is expected of this work. 
    \item[] Guidelines:
    \begin{itemize}
        \item The answer NA means that there is no societal impact of the work performed.
        \item If the authors answer NA or No, they should explain why their work has no societal impact or why the paper does not address societal impact.
        \item Examples of negative societal impacts include potential malicious or unintended uses (e.g.,, disinformation, generating fake profiles, surveillance), fairness considerations (e.g.,, deployment of technologies that could make decisions that unfairly impact specific groups), privacy considerations, and security considerations.
        \item The conference expects that many papers will be foundational research and not tied to particular applications, let alone deployments. However, if there is a direct path to any negative applications, the authors should point it out. For example, it is legitimate to point out that an improvement in the quality of generative models could be used to generate deepfakes for disinformation. On the other hand, it is not needed to point out that a generic algorithm for optimizing neural networks could enable people to train models that generate Deepfakes faster.
        \item The authors should consider possible harms that could arise when the technology is being used as intended and functioning correctly, harms that could arise when the technology is being used as intended but gives incorrect results, and harms following from (intentional or unintentional) misuse of the technology.
        \item If there are negative societal impacts, the authors could also discuss possible mitigation strategies (e.g.,, gated release of models, providing defenses in addition to attacks, mechanisms for monitoring misuse, mechanisms to monitor how a system learns from feedback over time, improving the efficiency and accessibility of ML).
    \end{itemize}
    
\item {\bf Safeguards}
    \item[] Question: Does the paper describe safeguards that have been put in place for responsible release of data or models that have a high risk for misuse (e.g.,, pretrained language models, image generators, or scraped datasets)?
    \item[] Answer: \answerNA{} % Replace by \answerYes{}, \answerNo{}, or \answerNA{}.
    \item[] Guidelines:
    \begin{itemize}
        \item The answer NA means that the paper poses no such risks.
        \item Released models that have a high risk for misuse or dual-use should be released with necessary safeguards to allow for controlled use of the model, for example by requiring that users adhere to usage guidelines or restrictions to access the model or implementing safety filters. 
        \item Datasets that have been scraped from the Internet could pose safety risks. The authors should describe how they avoided releasing unsafe images.
        \item We recognize that providing effective safeguards is challenging, and many papers do not require this, but we encourage authors to take this into account and make a best faith effort.
    \end{itemize}

\item {\bf Licenses for existing assets}
    \item[] Question: Are the creators or original owners of assets (e.g.,, code, data, models), used in the paper, properly credited and are the license and terms of use explicitly mentioned and properly respected?
    \item[] Answer: \answerYes{} % Replace by \answerYes{}, \answerNo{}, or \answerNA{}.
    \item[] Justification: All the owner of assets are properly cited in the reference and main body of our paper.
    \item[] Guidelines:
    \begin{itemize}
        \item The answer NA means that the paper does not use existing assets.
        \item The authors should cite the original paper that produced the code package or dataset.
        \item The authors should state which version of the asset is used and, if possible, include a URL.
        \item The name of the license (e.g.,, CC-BY 4.0) should be included for each asset.
        \item For scraped data from a particular source (e.g.,, website), the copyright and terms of service of that source should be provided.
        \item If assets are released, the license, copyright information, and terms of use in the package should be provided. For popular datasets, \url{paperswithcode.com/datasets} has curated licenses for some datasets. Their licensing guide can help determine the license of a dataset.
        \item For existing datasets that are re-packaged, both the original license and the license of the derived asset (if it has changed) should be provided.
        \item If this information is not available online, the authors are encouraged to reach out to the asset's creators.
    \end{itemize}

\item {\bf New Assets}
    \item[] Question: Are new assets introduced in the paper well documented and is the documentation provided alongside the assets?
    \item[] Answer: \answerNA{} % Replace by \answerYes{}, \answerNo{}, or \answerNA{}.

    \item[] Guidelines:
    \begin{itemize}
        \item The answer NA means that the paper does not release new assets.
        \item Researchers should communicate the details of the dataset/code/model as part of their submissions via structured templates. This includes details about training, license, limitations, etc. 
        \item The paper should discuss whether and how consent was obtained from people whose asset is used.
        \item At submission time, remember to anonymize your assets (if applicable). You can either create an anonymized URL or include an anonymized zip file.
    \end{itemize}

\item {\bf Crowdsourcing and Research with Human Subjects}
    \item[] Question: For crowdsourcing experiments and research with human subjects, does the paper include the full text of instructions given to participants and screenshots, if applicable, as well as details about compensation (if any)? 
    \item[] Answer: \answerNA{} % Replace by \answerYes{}, \answerNo{}, or \answerNA{}.
    \item[] Guidelines:
    \begin{itemize}
        \item The answer NA means that the paper does not involve crowdsourcing nor research with human subjects.
        \item Including this information in the supplemental material is fine, but if the main contribution of the paper involves human subjects, then as much detail as possible should be included in the main paper. 
        \item According to the NeurIPS Code of Ethics, workers involved in data collection, curation, or other labor should be paid at least the minimum wage in the country of the data collector. 
    \end{itemize}

\item {\bf Institutional Review Board (IRB) Approvals or Equivalent for Research with Human Subjects}
    \item[] Question: Does the paper describe potential risks incurred by study participants, whether such risks were disclosed to the subjects, and whether Institutional Review Board (IRB) approvals (or an equivalent approval/review based on the requirements of your country or institution) were obtained?
    \item[] Answer: \answerNA{} % Replace by \answerYes{}, \answerNo{}, or \answerNA{}.

    \item[] Guidelines:
    \begin{itemize}
        \item The answer NA means that the paper does not involve crowdsourcing nor research with human subjects.
        \item Depending on the country in which research is conducted, IRB approval (or equivalent) may be required for any human subjects research. If you obtained IRB approval, you should clearly state this in the paper. 
        \item We recognize that the procedures for this may vary significantly between institutions and locations, and we expect authors to adhere to the NeurIPS Code of Ethics and the guidelines for their institution. 
        \item For initial submissions, do not include any information that would break anonymity (if applicable), such as the institution conducting the review.
    \end{itemize}
\end{enumerate}

\end{document}